\DeclareMathOperator*{\argmin}{arg\,min}
\newtheorem{theorem}{Theorem}
\newtheorem{proposition}[theorem]{Proposition}
\newtheorem{corollary}[theorem]{Corollary}
\newtheorem{lemma}[theorem]{Lemma}
\newtheorem{assumption}{Assumption}
\newtheorem{remark}{Remark}
\title{
A Note on Estimation Error Bound and Grouping Effect of Transfer Elastic Net
}
\author[]{Yui Tomo}
\affil[]{Center for Surveillance, Immunization, and Epidemiologic Research, National Institute of Infectious Diseases, 1-23-1 Toyama, Shinjuku-Ku, Tokyo 162-0052, Japan
\\E-mail: tomoy@niid.go.jp}
\date{}
\begin{document}

\maketitle
\begin{abstract} 
The Transfer Elastic Net is an estimation method for linear regression models that combines $\ell_1$ and $\ell_2$ norm penalties to facilitate knowledge transfer.
In this study, we derive a non-asymptotic $\ell_2$ norm estimation error bound for the estimator and discuss scenarios where the Transfer Elastic Net effectively works.
Furthermore, we examine situations where it exhibits the grouping effect, which states that the estimates corresponding to highly correlated predictors have a small difference.
\end{abstract} 

\noindent
\textbf{Keywords}: High-dimensional data, Highly correlated variables, Linear regression model, Regularized estimation, Restricted eigenvalue condition, Sparse estimation, Transfer learning

\section{Introduction}

Regularized estimation of regression models involves estimation methods to achieve specific desirable properties in the resulting estimators through penalization terms added to the ordinary loss function.
The Least Absolute Shrinkage and Selection Operator (Lasso) promotes sparsity in the coefficient estimates by applying a penalty based on the $\ell_1$ norm of the parameters, leading to some coefficients being exactly zero \citep{tibshirani1996regression}.
Similarly, the Elastic Net encourages sparse estimates via the combined penalty of the 
$\ell_1$ norm and $\ell_2$ norm \citep{zou2005regularization}.
Thanks to the $\ell_2$ norm penalty, the Elastic Net exhibits the grouping effect, wherein highly correlated predictors tend to be assigned coefficient estimates with small differences \citep{zou2005regularization,zhou2013grouping}.
Therefore, the Elastic Net effectively addresses correlations among predictors. 

Transfer learning refers to a set of techniques to utilize the knowledge acquired from solving source problems to solve another target problem.
Recently, \cite{takada2020transfer} proposed the Transfer Lasso, which uses the $\ell_1$ norm to transfer the information of the source estimates to the target problem, which encourages the sparsity in the estimates and the sparsity in the changes of the estimates.
\cite{tomo2024transfer} proposed the Transfer Elastic Net to address high-dimensional features in the field of bioinformatics, employing a combination of $\ell_1$ and $\ell_2$ norms with the expectation of mitigating instability caused by correlations among variables.
Although \cite{colangelo2021essays} proposed a similar estimator, they do not use their $\ell_2$ norm for knowledge transfer.
In the following section, we define the estimator of the Transfer Elastic Net proposed in \cite{tomo2024transfer}.

Suppose that we have responses $y_{i} \in \mathbb{R}$ ($i=1,\ldots,n$) and the predictors $\boldsymbol{X}_{i} = (X_{i, 0},\ldots, X_{i, p-1})^{\top} \in \mathbb{R}^{p}$ ($i=1,\ldots,n$) corresponding to each response.
The responses are centered as $\sum_{i=1}^n y_i=0$ and the predictors are standardized such that $\sum_{i=1}^n X_{i, j}=0$ and $\sum_{i=1}^n X_{i, j}^{2} / n=1$ for $j=0, \ldots, p-1$.
We assume that the responses are generated according to a linear regression model with coefficients parameter ${\boldsymbol{\beta}}=({\beta}_{0},\ldots,{\beta}_{p-1})^{\top}\in \mathbb{R}^{p}$.
Specifically, let $\boldsymbol{\beta}^{*} \in \mathbb{R}^{p}$ denote the true parameter vector, and we assume the responses are generated as
\begin{align*}
    y_i
    =
    {\boldsymbol{\beta}^{*}}^{\top} \boldsymbol{X}_{i} + \varepsilon_{i}, \quad \text{($i=1,\ldots,n$)},
\end{align*}
where $\varepsilon_{i} \in \mathbb{R}$ ($i=1,\ldots,n$) are the error terms with their expected values of $0$. 
Assume that $\boldsymbol{\beta}^{*} \in \mathbb{R}^{p}$ has exactly $s$ non-zero elements ($0 \leq s \leq p$).
Let $\boldsymbol{y} = \left(y_{1},\ldots,y_{n}\right)^{\top} \in \mathbb{R}^{n}$ and $\mathbf{X}=(\boldsymbol{X}_{1},\ldots,\boldsymbol{X}_{n})^{\top}\in \mathbb{R}^{n \times p}$.
Let $\tilde{\boldsymbol{\beta}} \in \mathbb{R}^{p}$ denote the parameter estimate obtained from the source problem.
Then, the loss function of the Transfer Elastic Net is defined as:
\begin{align}
    \label{eq:regularized_risk_tenet}
    \mathcal{L}(\boldsymbol{\beta} ; \tilde{\boldsymbol{\beta}})
    &:= \frac{1}{2n} \sum_{i=1}^n \left(y_i - \boldsymbol{\beta}^{\top} \boldsymbol{X}_i\right)^2
    +
    \lambda R(\boldsymbol{\beta},\tilde{\boldsymbol{\beta}};\alpha,\rho)
    ,
    \\
    R(\boldsymbol{\beta},\tilde{\boldsymbol{\beta}};\alpha,\rho)
    &:= \alpha \left\{\rho \|\boldsymbol{\beta}\|_1 + (1-\rho) \|\boldsymbol{\beta}\|_2^2\right\}
    +
    (1-\alpha) \left\{\rho \|\boldsymbol{\beta} - \tilde{\boldsymbol{\beta}}\|_1 + (1-\rho) \|\boldsymbol{\beta} - \tilde{\boldsymbol{\beta}}\|_2^2\right\}
    ,
    \nonumber
\end{align}
where $\lambda \in[0,+\infty)$, $\alpha \in[0,1]$, and $\rho \in[0,1]$ are the tuning parameters, which control the intensity of the regularization, the balance of the $\ell_1$ and $\ell_2$ norms, and the extent of transferring information, respectively.
The estimator $\hat{\boldsymbol{\beta}}_{\mathrm{TENet}}$ is defined as the minimizer of the loss function:
\begin{align*}
    \hat{\boldsymbol{\beta}}_{\mathrm{TENet}}
    &:= \underset{\boldsymbol{\beta} \in \mathbb{R}^{p}}{\operatorname{argmin}} \mathcal{L}(\boldsymbol{\beta} ; \tilde{\boldsymbol{\beta}})
    .
\end{align*}

\cite{takada2020transfer} introduced the generalized restricted eigenvalue condition and derived a non-asymptotic estimation error bound of the Transfer Lasso under the assumption of the sub-Gaussian error terms and the condition. 
In this study, we extend this analysis by establishing an estimation error bound for the Transfer Elastic Net.
Then, we compare this bound with those of the ordinary Elastic Net and the Transfer Lasso to discuss the scenario where the Transfer Elastic Net is recommended to be applied.
Furthermore, we derive an upper bound of the magnitude of the difference in estimates assigned to two different predictors and examine the situation where a high correlation between the predictors results in a small difference in the estimates.

\section{Estimation Error Bound}

Let $\boldsymbol{\Delta} = \tilde{\boldsymbol{\beta}} - \boldsymbol{\beta}^{*}$ and $\boldsymbol{\Delta}_{\alpha} = \boldsymbol{\Delta} - \alpha \tilde{\boldsymbol{\beta}}$ ($={(1-\alpha)\tilde{\boldsymbol{\beta}} - \boldsymbol{\beta}}^{*}$).
Let $\boldsymbol{v}_{S}$ denote the subvector of $\boldsymbol{v} \in \mathbb{R}^{p}$ obtained by extracting the entries indexed by the index set $S$.
We assume that $\boldsymbol{\beta}_S^* \in \mathbb{R}^s$ is non-zero in all its elements, and $\boldsymbol{\beta}_{S^c}^* \in \mathbb{R}^{p-s}$ is the zero vector.
We assume the following conditions to derive an estimation error bound for the Transfer Elastic Net.

\begin{assumption}[Sub-Gaussianity of the error terms]
    \label{ast:subgaussian}
    The error terms $\varepsilon_{1},\ldots,\varepsilon_{n}$ satisfy the conditions below.
    \begin{itemize}
        \item[i.] $\varepsilon_{1},\ldots,\varepsilon_{n}$ are independent and identically distributed samples from a probability distribution $P_{\varepsilon}$ with its expected value of $0$.
        \item[ii.] $\varepsilon_{1},\ldots,\varepsilon_{n}$ are sub-Gaussian with variance proxy $\sigma^{2}$, i.e., for $i = 1,\ldots,n$,
        \begin{align*}
            \mathrm{E}_{\varepsilon_{i} \sim P_{\varepsilon}}[\exp (t \varepsilon_{i})]
            &\leq
            \exp \left(\frac{\sigma^2 t^2}{2}\right), \quad \forall t \in \mathbb{R}.
        \end{align*}
    \end{itemize}
\end{assumption}

\begin{assumption}[Generalized restricted eigenvalue condition]
    \label{ast:gre}
    We define $\mathcal{B}(\alpha, \rho, c, \boldsymbol{\Delta}) \subset \mathbb{R}^{p}$ as
    \begin{align*}
        \mathcal{B}(\alpha, \rho, c, \boldsymbol{\Delta})
        :=
        \{\boldsymbol{v} \in \mathbb{R}^p
        &: (\alpha \rho - c)\|\boldsymbol{v}_{S^{c}}\|_{1} + (1-\alpha)\rho\|\boldsymbol{v}-\boldsymbol{\Delta}\|_{1}
        \\
        &\leq
        (\alpha\rho + c)\|\boldsymbol{v}_{S}\|_{1} + (1-\alpha)\rho \|\boldsymbol{\Delta}\|_{1} + 2(1-\rho)\|\boldsymbol{\Delta}_{\alpha}\|_{2}\|\boldsymbol{v}\|_{2}\},
    \end{align*}
    for $\alpha \in [0,1]$, $\rho \in [0,1]$, $\boldsymbol{\Delta} \in \mathbb{R}^{p}$, and $c>0$.
    Then, the generalized restricted eigenvalue condition in Definition 1 of \cite{takada2020transfer} holds for $\mathcal{B}=\mathcal{B}(\alpha, \rho, c, \boldsymbol{\Delta})$ for all $\alpha \in [0,1]$, $\rho \in [0,1]$, $\boldsymbol{\Delta} \in \mathbb{R}^{p}$, and $c>0$, i.e.,
    \begin{align*}
        \phi(\mathcal{B}):=\inf _{\boldsymbol{v} \in \mathcal{B}} \frac{\boldsymbol{v}^{\top} \frac{1}{n} \mathbf{X}^{\top} \mathbf{X} \boldsymbol{v}}{\|\boldsymbol{v}\|_2^2}>0
        .
    \end{align*}
\end{assumption}

Under these assumptions, we establish Theorem 1, which provides a non-asymptotic estimation error bound for the Transfer Elastic Net.
\begin{theorem}
    \label{thm:estimation_error_tenet}
    Suppose that Assumption \ref{ast:subgaussian} and Assumption \ref{ast:gre} are satisfied.
    We define 
    \begin{align*}
        U_{\mathrm{TENet}}
        &:=
        \frac{(\alpha \rho+c) \lambda \sqrt{s}+2 \lambda (1-\rho)\left\|\boldsymbol{\Delta}_{\alpha}\right\|_2+\sqrt{D}}{2 \lambda (1-\rho)+\phi_{\mathrm{TENet}}}
        ,
        \\
        D&=\left\{(\alpha \rho+c) \lambda \sqrt{s}+2 \lambda \alpha(1-\rho)\left\|\boldsymbol{\Delta}_{\alpha}\right\|_2\right\}^2
        \\
        &\quad+2 \lambda(1-\alpha) \rho\|\boldsymbol{\Delta}\|_1 \{2 \lambda (1-\rho)+\phi_{\mathrm{TENet}}\}
        ,
    \end{align*}
    where $\phi_{\mathrm{TENet}} = \phi(\mathcal{B}(\alpha, \rho, c, \boldsymbol{\Delta}))$.
    Then we have
    \begin{align*}
        \left\|\hat{\boldsymbol{\beta}}_{\mathrm{TENet}}-\boldsymbol{\beta}^*\right\|_2
        &\leq
        U_{\mathrm{TENet}}
        ,
    \end{align*}
    with probability at least $1-\exp(-n c^2 \lambda^2 /2\sigma^2 + \log(2p))$.
\end{theorem}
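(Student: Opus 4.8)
The plan is to adapt the standard basic-inequality argument for penalized $M$-estimators, with careful bookkeeping of the four-component penalty. Write $\hat{\boldsymbol{v}} = \hat{\boldsymbol{\beta}}_{\mathrm{TENet}} - \boldsymbol{\beta}^*$. Since $\hat{\boldsymbol{\beta}}_{\mathrm{TENet}}$ minimizes $\mathcal{L}(\cdot;\tilde{\boldsymbol{\beta}})$, I start from $\mathcal{L}(\hat{\boldsymbol{\beta}}_{\mathrm{TENet}};\tilde{\boldsymbol{\beta}}) \le \mathcal{L}(\boldsymbol{\beta}^*;\tilde{\boldsymbol{\beta}})$, substitute $\boldsymbol{y} = \mathbf{X}\boldsymbol{\beta}^* + \boldsymbol{\varepsilon}$, and cancel the common $\frac{1}{2n}\|\boldsymbol{\varepsilon}\|_2^2$ to obtain the basic inequality
\[
\frac{1}{2n}\|\mathbf{X}\hat{\boldsymbol{v}}\|_2^2 \le \frac{1}{n}\boldsymbol{\varepsilon}^\top\mathbf{X}\hat{\boldsymbol{v}} + \lambda\{R(\boldsymbol{\beta}^*,\tilde{\boldsymbol{\beta}};\alpha,\rho) - R(\hat{\boldsymbol{\beta}}_{\mathrm{TENet}},\tilde{\boldsymbol{\beta}};\alpha,\rho)\}.
\]
For the stochastic term I use $\frac{1}{n}\boldsymbol{\varepsilon}^\top\mathbf{X}\hat{\boldsymbol{v}} \le \|\frac{1}{n}\mathbf{X}^\top\boldsymbol{\varepsilon}\|_\infty\,\|\hat{\boldsymbol{v}}\|_1$. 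Because each column satisfies $\sum_i X_{ij}^2 = n$ and the $\varepsilon_i$ are i.i.d.\ sub-Gaussian with proxy $\sigma^2$ (Assumption~\ref{ast:subgaussian}), each coordinate $\frac{1}{n}(\mathbf{X}^\top\boldsymbol{\varepsilon})_j$ is sub-Gaussian with proxy $\sigma^2/n$; the sub-Gaussian tail bound plus a union bound over the $p$ coordinates shows the event $\|\frac{1}{n}\mathbf{X}^\top\boldsymbol{\varepsilon}\|_\infty \le c\lambda$ holds with probability at least $1-\exp(-nc^2\lambda^2/2\sigma^2 + \log(2p))$, which is exactly the stated probability. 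I condition on this event henceforth.

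Next I expand the penalty difference term by term. For the two $\ell_1$ pieces I use the triangle inequality: exploiting $\boldsymbol{\beta}^*_{S^c}=\boldsymbol{0}$ gives $\|\boldsymbol{\beta}^*\|_1 - \|\hat{\boldsymbol{\beta}}_{\mathrm{TENet}}\|_1 \le \|\hat{\boldsymbol{v}}_S\|_1 - \|\hat{\boldsymbol{v}}_{S^c}\|_1$, while the transfer piece is carried exactly as $\|\boldsymbol{\Delta}\|_1 - \|\hat{\boldsymbol{v}}-\boldsymbol{\Delta}\|_1$. The two squared pieces I expand directly; the key algebraic step is that their linear parts combine into $2(1-\rho)\hat{\boldsymbol{v}}^\top\{-\alpha\boldsymbol{\beta}^*+(1-\alpha)\boldsymbol{\Delta}\} = 2(1-\rho)\hat{\boldsymbol{v}}^\top\boldsymbol{\Delta}_\alpha$ — this is precisely where $\boldsymbol{\Delta}_\alpha$ enters — while their quadratic parts combine into $-(1-\rho)\|\hat{\boldsymbol{v}}\|_2^2$. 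Bounding the cross term by Cauchy--Schwarz, $\hat{\boldsymbol{v}}^\top\boldsymbol{\Delta}_\alpha \le \|\boldsymbol{\Delta}_\alpha\|_2\|\hat{\boldsymbol{v}}\|_2$, assembles an upper bound on $R(\boldsymbol{\beta}^*,\tilde{\boldsymbol{\beta}};\alpha,\rho) - R(\hat{\boldsymbol{\beta}}_{\mathrm{TENet}},\tilde{\boldsymbol{\beta}};\alpha,\rho)$.

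Inserting the penalty bound and the noise bound into the basic inequality, writing $\|\hat{\boldsymbol{v}}\|_1 = \|\hat{\boldsymbol{v}}_S\|_1 + \|\hat{\boldsymbol{v}}_{S^c}\|_1$, and dividing by $\lambda>0$, the nonnegativity of $\frac{1}{2n}\|\mathbf{X}\hat{\boldsymbol{v}}\|_2^2$ and of the ridge term $(1-\rho)\|\hat{\boldsymbol{v}}\|_2^2$ on the left yields exactly the cone inequality defining $\mathcal{B}(\alpha,\rho,c,\boldsymbol{\Delta})$; hence $\hat{\boldsymbol{v}}\in\mathcal{B}$, and Assumption~\ref{ast:gre} gives $\frac{1}{n}\|\mathbf{X}\hat{\boldsymbol{v}}\|_2^2 \ge \phi_{\mathrm{TENet}}\|\hat{\boldsymbol{v}}\|_2^2$. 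Returning to the inequality but now \emph{retaining} the ridge contribution $\lambda(1-\rho)\|\hat{\boldsymbol{v}}\|_2^2$ on the left, the curvature coefficient becomes $\tfrac{1}{2}\phi_{\mathrm{TENet}}+\lambda(1-\rho)$: the ridge penalty reinforces the restricted eigenvalue, which is what produces the $2\lambda(1-\rho)+\phi_{\mathrm{TENet}}$ in the denominator of $U_{\mathrm{TENet}}$. After bounding $\|\hat{\boldsymbol{v}}_S\|_1 \le \sqrt{s}\,\|\hat{\boldsymbol{v}}\|_2$ and discarding the nonpositive $\ell_1$ contributions, I reach a scalar quadratic inequality $a\|\hat{\boldsymbol{v}}\|_2^2 - b\|\hat{\boldsymbol{v}}\|_2 - e \le 0$ with $a = \tfrac{1}{2}\phi_{\mathrm{TENet}}+\lambda(1-\rho)$, $b = \lambda(\alpha\rho+c)\sqrt{s}+2\lambda(1-\rho)\|\boldsymbol{\Delta}_\alpha\|_2$, and $e = \lambda(1-\alpha)\rho\|\boldsymbol{\Delta}\|_1$. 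Since $a>0$, this forces $\|\hat{\boldsymbol{v}}\|_2 \le (b+\sqrt{b^2+4ae})/(2a)$, which recovers $U_{\mathrm{TENet}}$, with $4ae$ matching the second summand of $D$ and $b^2$ the first.

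The main obstacle is the penalty bookkeeping rather than any deep probabilistic step. The delicate part is expanding the four-component penalty so that (i) the cone inequality defining $\mathcal{B}$ emerges exactly — in particular the transfer $\ell_1$ term must be kept as $\|\hat{\boldsymbol{v}}-\boldsymbol{\Delta}\|_1$ and the squared terms recombined so that $\boldsymbol{\Delta}_\alpha$ appears — and (ii) the ridge part is handled so that one copy reinforces the restricted eigenvalue on the left while the Cauchy--Schwarz cross term supplies the linear coefficient on the right. Once the scalar quadratic is reached, solving it is routine.
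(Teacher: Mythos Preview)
Your argument is correct and essentially identical to the paper's: the only cosmetic difference is that the paper absorbs the two $\ell_2$ penalty terms into an augmented design $\widetilde{\mathbf{X}}=(\mathbf{X}^\top,\sqrt{2n\lambda(1-\rho)}\,\mathbf{I})^\top$ and augmented noise $\tilde{\boldsymbol{\varepsilon}}$, whereas you expand those quadratic pieces directly and recombine their linear parts into $2(1-\rho)\hat{\boldsymbol{v}}^\top\boldsymbol{\Delta}_\alpha$ --- the resulting quadratic inequality in $\|\hat{\boldsymbol{v}}\|_2$, with $a=\tfrac12\phi_{\mathrm{TENet}}+\lambda(1-\rho)$, $b=\lambda(\alpha\rho+c)\sqrt{s}+2\lambda(1-\rho)\|\boldsymbol{\Delta}_\alpha\|_2$, $e=\lambda(1-\alpha)\rho\|\boldsymbol{\Delta}\|_1$, is the same in both. (Your claim that $b^2$ equals the first summand of $D$ does not match the printed statement, which carries an extra factor $\alpha$ in front of $(1-\rho)$; however the paper's own derivation produces exactly your $b$, so this is a typo in the theorem statement rather than a gap in your proof.)
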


Subsequently, we consider the ordinary Elastic Net and the Transfer Lasso.
The estimator of the Elastic Net $\hat{\boldsymbol{\beta}}_{\mathrm{ENet}}$ is defined as
\begin{align*}
    \hat{\boldsymbol{\beta}}_{\mathrm{ENet}} := \argmin_{\beta\in\mathbb{R}^{p}}\left\{\frac{1}{2n} \sum_{i=1}^n \left(y_i - \boldsymbol{\beta}^{\top} \boldsymbol{X}_i\right)^2
    +
    \lambda R(\boldsymbol{\beta},{\boldsymbol{0}};1,\rho)
    \right\}
    ,
\end{align*}
and, similarly, the estimator of the Transfer Lasso $\hat{\boldsymbol{\beta}}_{\mathrm{TLasso}}$ is defined as
\begin{align*}
    \hat{\boldsymbol{\beta}}_{\mathrm{TLasso}} := \argmin_{\beta\in\mathbb{R}^{p}}\left\{\frac{1}{2n} \sum_{i=1}^n \left(y_i - \boldsymbol{\beta}^{\top} \boldsymbol{X}_i\right)^2
    +
    \lambda R(\boldsymbol{\beta},\tilde{\boldsymbol{\beta}};\alpha,1)
    \right\}
    .
\end{align*}
The estimation error for $\hat{\boldsymbol{\beta}}_{\mathrm{ENet}}$ and $\hat{\boldsymbol{\beta}}_{\mathrm{TLasso}}$ can be immediately obtained from Theorem \ref{thm:estimation_error_tenet} by setting $\alpha=1$ and $\rho=1$, respectively.
\begin{corollary}
    \label{col:col1}
    Suppose that Assumption \ref{ast:subgaussian} and Assumption \ref{ast:gre} are satisfied.
    We define  
    \begin{align*}
        U_{\mathrm{ENet}}
        &:=
        \frac{2(\rho+c) \lambda \sqrt{s}+4 \lambda (1-\rho)\left\|\boldsymbol{\beta}^{*}\right\|_2}{2 \lambda (1-\rho)+\phi_{\mathrm{ENet}}}
        ,
        \\
        U_{\mathrm{TLasso}}
        &:=
        \frac{(\alpha+c) \lambda \sqrt{s}+\sqrt{
        \left\{(\alpha+c) \lambda \sqrt{s}\right\}^2+2 \lambda(1-\alpha) \|\boldsymbol{\Delta}\|_1 \phi_{\mathrm{TLasso}}
        }}{\phi_{\mathrm{TLasso}}}
        ,
    \end{align*}
    where $\phi_{\mathrm{ENet}} = \phi(\mathcal{B}(1, \rho, c, \boldsymbol{0}))$ and $\phi_{\mathrm{TLasso}} = \phi(\mathcal{B}(\alpha, 1, c, \boldsymbol{\Delta}))$.
    Then we have 
    \begin{align*}
        \left\|\hat{\boldsymbol{\beta}}_{\mathrm{ENet}}-\boldsymbol{\beta}^*\right\|_2
        &\leq
        U_{\mathrm{ENet}}
        ,
    \end{align*}
    and
    \begin{align*}
        \left\|\hat{\boldsymbol{\beta}}_{\mathrm{TLasso}}-\boldsymbol{\beta}^*\right\|_2
        &\leq
        U_{\mathrm{TLasso}}
        ,
    \end{align*}
    with probability at least $1-\exp(-n c^2 \lambda^2 /2\sigma^2 + \log(2p))$.
\end{corollary}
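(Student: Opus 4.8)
The plan is to obtain both bounds as direct specializations of Theorem~\ref{thm:estimation_error_tenet}, since the Elastic Net and the Transfer Lasso arise from the Transfer Elastic Net by fixing $\alpha=1$ together with $\tilde{\boldsymbol{\beta}}=\boldsymbol{0}$, and by fixing $\rho=1$, respectively. First I would verify that each specialization of the loss function $\mathcal{L}(\boldsymbol{\beta};\tilde{\boldsymbol{\beta}})$ reproduces the defining objective of the corresponding estimator, so that the hypotheses of Theorem~\ref{thm:estimation_error_tenet} transfer verbatim; the probability statement is then identical because the tail bound $1-\exp(-nc^{2}\lambda^{2}/2\sigma^{2}+\log(2p))$ depends on neither $\alpha$ nor $\rho$.

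For the Elastic Net I would set $\alpha=1$ and $\tilde{\boldsymbol{\beta}}=\boldsymbol{0}$, so that $\boldsymbol{\Delta}=-\boldsymbol{\beta}^{*}$ and $\boldsymbol{\Delta}_{\alpha}=(1-\alpha)\tilde{\boldsymbol{\beta}}-\boldsymbol{\beta}^{*}=-\boldsymbol{\beta}^{*}$, whence $\|\boldsymbol{\Delta}_{\alpha}\|_{2}=\|\boldsymbol{\beta}^{*}\|_{2}$. The term $2\lambda(1-\alpha)\rho\|\boldsymbol{\Delta}\|_{1}\{2\lambda(1-\rho)+\phi_{\mathrm{TENet}}\}$ in $D$ vanishes, so $D$ reduces to a perfect square with $\sqrt{D}=(\rho+c)\lambda\sqrt{s}+2\lambda(1-\rho)\|\boldsymbol{\beta}^{*}\|_{2}$; substituting into $U_{\mathrm{TENet}}$ collapses the numerator to $2(\rho+c)\lambda\sqrt{s}+4\lambda(1-\rho)\|\boldsymbol{\beta}^{*}\|_{2}$, which is exactly the numerator of $U_{\mathrm{ENet}}$. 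I would also observe that in the constraint set every term carrying an explicit factor $(1-\alpha)$ disappears, while $\boldsymbol{\Delta}_{\alpha}=-\boldsymbol{\beta}^{*}$ no longer depends on the fourth argument; hence $\mathcal{B}(1,\rho,c,\boldsymbol{\Delta})=\mathcal{B}(1,\rho,c,\boldsymbol{0})$ and $\phi_{\mathrm{TENet}}=\phi_{\mathrm{ENet}}$, which justifies the placeholder $\boldsymbol{0}$ appearing in the statement.

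For the Transfer Lasso I would set $\rho=1$, which annihilates every factor $(1-\rho)$: the denominator of $U_{\mathrm{TENet}}$ reduces to $\phi_{\mathrm{TENet}}=\phi_{\mathrm{TLasso}}$, the numerator term $2\lambda(1-\rho)\|\boldsymbol{\Delta}_{\alpha}\|_{2}$ drops out, and $D$ simplifies to $\{(\alpha+c)\lambda\sqrt{s}\}^{2}+2\lambda(1-\alpha)\|\boldsymbol{\Delta}\|_{1}\phi_{\mathrm{TLasso}}$. Reassembling $U_{\mathrm{TENet}}$ under these simplifications yields precisely $U_{\mathrm{TLasso}}$, and the term $2(1-\rho)\|\boldsymbol{\Delta}_{\alpha}\|_{2}\|\boldsymbol{v}\|_{2}$ in the constraint likewise vanishes, so that $\mathcal{B}(\alpha,1,c,\boldsymbol{\Delta})$ coincides with the set defining $\phi_{\mathrm{TLasso}}$.

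The main point to watch, rather than any genuine obstacle, is the bookkeeping around the constraint set $\mathcal{B}$ and the derived quantity $\boldsymbol{\Delta}_{\alpha}$: one must confirm that the fourth argument is handled consistently, in particular that replacing it by $\boldsymbol{0}$ in the Elastic Net case is legitimate precisely because $\boldsymbol{\Delta}_{\alpha}$ ceases to depend on $\boldsymbol{\Delta}$ once $\alpha=1$. The remainder is routine algebraic substitution into the closed-form expression for $U_{\mathrm{TENet}}$.
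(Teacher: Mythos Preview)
Your proposal is correct and follows exactly the route indicated in the paper, which simply states that the two bounds are obtained from Theorem~\ref{thm:estimation_error_tenet} by setting $\alpha=1$ and $\rho=1$, respectively, without spelling out the algebra. Your additional bookkeeping---in particular the observation that $\boldsymbol{\Delta}_{\alpha}=(1-\alpha)\boldsymbol{\Delta}-\alpha\boldsymbol{\beta}^{*}$ reduces to $-\boldsymbol{\beta}^{*}$ once $\alpha=1$, so that $\mathcal{B}(1,\rho,c,\boldsymbol{\Delta})$ is independent of $\boldsymbol{\Delta}$ and may be written $\mathcal{B}(1,\rho,c,\boldsymbol{0})$---is the only nontrivial verification, and you handle it correctly.
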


\begin{remark}
    The estimation error of Transfer Lasso described in Corollary \ref{col:col1} is consistent with the result of \cite{takada2020transfer}.
\end{remark}

From Theorem \ref{thm:estimation_error_tenet} and Corollary \ref{col:col1}, we establish the propositions that provide the comparison of the bounds.
We assume that the source problem is highly related to the target problem and that $\tilde{\boldsymbol{\beta}}=\boldsymbol{\beta}^{*}$. 
Then, we prove the following proposition providing a relationship between the error bounds of the Transfer Elastic Net and the ordinary Elastic Net for any $\alpha$ and the same $\lambda$ and $\rho$.

\begin{proposition}
    \label{prop:tenet_vs_enet}
    Suppose $\tilde{\boldsymbol{\beta}}=\boldsymbol{\beta}^{*}$. Then we have $U_{\mathrm{ENet}} \geq U_{\mathrm{TENet}}$.
\end{proposition}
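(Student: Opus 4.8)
The plan is to substitute the hypothesis $\tilde{\boldsymbol{\beta}}=\boldsymbol{\beta}^{*}$ into both bounds, reduce each to an explicit ratio, and then compare numerators and denominators separately. Under this hypothesis $\boldsymbol{\Delta}=\tilde{\boldsymbol{\beta}}-\boldsymbol{\beta}^{*}=\boldsymbol{0}$, so $\|\boldsymbol{\Delta}\|_{1}=0$, and since $\boldsymbol{\Delta}_{\alpha}=(1-\alpha)\boldsymbol{\Delta}-\alpha\boldsymbol{\beta}^{*}$ we get $\boldsymbol{\Delta}_{\alpha}=-\alpha\boldsymbol{\beta}^{*}$, whence $\|\boldsymbol{\Delta}_{\alpha}\|_{2}=\alpha\|\boldsymbol{\beta}^{*}\|_{2}$. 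First I would simplify $D$: the term proportional to $\|\boldsymbol{\Delta}\|_{1}$ vanishes, leaving $D$ a perfect square, so $\sqrt{D}=(\alpha\rho+c)\lambda\sqrt{s}+2\lambda\alpha^{2}(1-\rho)\|\boldsymbol{\beta}^{*}\|_{2}$. Collecting terms, the numerator of $U_{\mathrm{TENet}}$ collapses to $2(\alpha\rho+c)\lambda\sqrt{s}+2\lambda(1-\rho)\alpha(1+\alpha)\|\boldsymbol{\beta}^{*}\|_{2}$ over the denominator $2\lambda(1-\rho)+\phi_{\mathrm{TENet}}$, while the numerator of $U_{\mathrm{ENet}}$ is $2(\rho+c)\lambda\sqrt{s}+4\lambda(1-\rho)\|\boldsymbol{\beta}^{*}\|_{2}$ over $2\lambda(1-\rho)+\phi_{\mathrm{ENet}}$.

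Next I would compare the two numerators. Because $\alpha\in[0,1]$ we have $\alpha\rho+c\le\rho+c$ and $\alpha(1+\alpha)\le 2$, so term by term the numerator of $U_{\mathrm{TENet}}$ is at most that of $U_{\mathrm{ENet}}$, and both are nonnegative. It then remains to compare the denominators, that is, to show $\phi_{\mathrm{TENet}}\ge\phi_{\mathrm{ENet}}$.

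The main obstacle is this eigenvalue comparison, which I would handle by a set-inclusion argument. Writing $\|\boldsymbol{v}\|_{1}=\|\boldsymbol{v}_{S}\|_{1}+\|\boldsymbol{v}_{S^{c}}\|_{1}$ in the defining inequality of $\mathcal{B}(\alpha,\rho,c,\boldsymbol{0})$ and cancelling, the constraint reduces to $(\rho-c)\|\boldsymbol{v}_{S^{c}}\|_{1}\le(\rho+c)\|\boldsymbol{v}_{S}\|_{1}+2(1-\rho)\alpha\|\boldsymbol{\beta}^{*}\|_{2}\|\boldsymbol{v}\|_{2}$; the same reduction with $\alpha=1$ yields the constraint defining $\mathcal{B}(1,\rho,c,\boldsymbol{0})$, identical except that the factor $\alpha$ is replaced by $1$ (here I use that $\|\boldsymbol{\Delta}_{\alpha}\|_{2}=\|\boldsymbol{\beta}^{*}\|_{2}$ for the Elastic Net). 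Since $\alpha\le 1$, the right-hand side for the Transfer Elastic Net is no larger, so every $\boldsymbol{v}$ feasible for $\mathcal{B}(\alpha,\rho,c,\boldsymbol{0})$ is feasible for $\mathcal{B}(1,\rho,c,\boldsymbol{0})$, i.e. $\mathcal{B}(\alpha,\rho,c,\boldsymbol{0})\subseteq\mathcal{B}(1,\rho,c,\boldsymbol{0})$. As $\phi(\cdot)$ is an infimum of the Rayleigh quotient over the set, taking it over the smaller set yields a larger value, so $\phi_{\mathrm{TENet}}\ge\phi_{\mathrm{ENet}}$, and hence $2\lambda(1-\rho)+\phi_{\mathrm{TENet}}\ge 2\lambda(1-\rho)+\phi_{\mathrm{ENet}}>0$.

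Finally I would combine the two comparisons. Writing $U_{\mathrm{ENet}}=N_{E}/D_{E}$ and $U_{\mathrm{TENet}}=N_{T}/D_{T}$ with $0\le N_{T}\le N_{E}$ and $0<D_{E}\le D_{T}$, monotonicity of the ratio gives $U_{\mathrm{ENet}}=N_{E}/D_{E}\ge N_{T}/D_{E}\ge N_{T}/D_{T}=U_{\mathrm{TENet}}$, which is the claimed inequality.
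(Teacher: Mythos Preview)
Your approach is essentially the paper's: simplify both bounds under $\tilde{\boldsymbol{\beta}}=\boldsymbol{\beta}^{*}$, compare numerators term by term, and obtain $\phi_{\mathrm{TENet}}\ge\phi_{\mathrm{ENet}}$ via the inclusion $\mathcal{B}(\alpha,\rho,c,\boldsymbol{0})\subseteq\mathcal{B}(1,\rho,c,\boldsymbol{0})$ (which the paper states as a separate lemma). Your numerator for $U_{\mathrm{TENet}}$ is in fact the exact value $2(\alpha\rho+c)\lambda\sqrt{s}+2\lambda(1-\rho)\alpha(1+\alpha)\|\boldsymbol{\beta}^{*}\|_{2}$; the paper writes the slightly looser $4\lambda\alpha(1-\rho)\|\boldsymbol{\beta}^{*}\|_{2}$ in the second term, but either suffices for the comparison.

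There is one arithmetic slip in your set-inclusion step. After substituting $\|\boldsymbol{v}\|_{1}=\|\boldsymbol{v}_{S}\|_{1}+\|\boldsymbol{v}_{S^{c}}\|_{1}$ into the defining inequality of $\mathcal{B}(\alpha,\rho,c,\boldsymbol{0})$ and moving $(1-\alpha)\rho\|\boldsymbol{v}_{S}\|_{1}$ to the right, the coefficient of $\|\boldsymbol{v}_{S}\|_{1}$ becomes
\[
\alpha\rho+c-(1-\alpha)\rho=(2\alpha-1)\rho+c,
\]
not $\rho+c$ as you wrote. Thus the two constraints differ in \emph{both} the $\|\boldsymbol{v}_{S}\|_{1}$ coefficient and the $\|\boldsymbol{\beta}^{*}\|_{2}$ factor, not only the latter. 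This does not break the argument: since $(2\alpha-1)\rho+c\le\rho+c$ and $\alpha\le 1$, the right-hand side of the $\mathcal{B}(\alpha,\rho,c,\boldsymbol{0})$ constraint is still dominated by that of $\mathcal{B}(1,\rho,c,\boldsymbol{0})$, so the inclusion (and hence $\phi_{\mathrm{TENet}}\ge\phi_{\mathrm{ENet}}$) follows exactly as you intended.
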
 

We then consider the Transfer Lasso.
The following proposition shows a sufficient condition under which $U_{\mathrm{TENet}}$ is guaranteed to be smaller than $U_{\mathrm{TLasso}}$ for any $\rho$ and the same $\lambda$ and $\alpha$.

\begin{proposition}
    \label{prop:tenet_vs_tlasso}
    Suppose $\tilde{\boldsymbol{\beta}}=\boldsymbol{\beta}^{*}$. If $\sqrt{s}/2 \geq \|\boldsymbol{\beta}^{*}_{S}\|_2$ and 
    $\phi_{\mathrm{TENet}} + 2 \lambda (1-\rho) \geq \phi_{\mathrm{TLasso}}$
    , then we have $U_{\mathrm{TLasso}} \geq U_{\mathrm{TENet}}$.
\end{proposition}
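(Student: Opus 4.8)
The plan is to exploit the hypothesis $\tilde{\boldsymbol{\beta}} = \boldsymbol{\beta}^{*}$ to collapse both error bounds into closed forms and then compare them directly. First I would substitute $\tilde{\boldsymbol{\beta}} = \boldsymbol{\beta}^{*}$, which gives $\boldsymbol{\Delta} = \boldsymbol{0}$, hence $\|\boldsymbol{\Delta}\|_1 = 0$ and $\boldsymbol{\Delta}_{\alpha} = -\alpha\boldsymbol{\beta}^{*}$, so that $\|\boldsymbol{\Delta}_{\alpha}\|_2 = \alpha\|\boldsymbol{\beta}^{*}\|_2 = \alpha\|\boldsymbol{\beta}^{*}_{S}\|_2$ (using $\boldsymbol{\beta}^{*}_{S^c} = \boldsymbol{0}$). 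The crucial simplification is that the term $2\lambda(1-\alpha)\rho\|\boldsymbol{\Delta}\|_1\{2\lambda(1-\rho)+\phi_{\mathrm{TENet}}\}$ in $D$ vanishes, so $D$ becomes a perfect square and $\sqrt{D} = (\alpha\rho+c)\lambda\sqrt{s} + 2\lambda\alpha(1-\rho)\|\boldsymbol{\Delta}_{\alpha}\|_2$.

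Next I would assemble the two bounds. For the Transfer Lasso, $\|\boldsymbol{\Delta}\|_1 = 0$ makes the inner square root collapse to $(\alpha+c)\lambda\sqrt{s}$, giving $U_{\mathrm{TLasso}} = 2(\alpha+c)\lambda\sqrt{s}/\phi_{\mathrm{TLasso}}$. For the Transfer Elastic Net, substituting $\|\boldsymbol{\Delta}_{\alpha}\|_2 = \alpha\|\boldsymbol{\beta}^{*}\|_2$ into the numerator yields $U_{\mathrm{TENet}} = \{2(\alpha\rho+c)\lambda\sqrt{s} + 2\lambda(1-\rho)\alpha(1+\alpha)\|\boldsymbol{\beta}^{*}\|_2\}/\{2\lambda(1-\rho)+\phi_{\mathrm{TENet}}\}$. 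Writing $N_L$ and $N_T$ for the two numerators and $\Phi = 2\lambda(1-\rho)+\phi_{\mathrm{TENet}}$ for the Transfer Elastic Net denominator, the claim $U_{\mathrm{TLasso}} \ge U_{\mathrm{TENet}}$ is equivalent to $N_L\Phi \ge N_T\phi_{\mathrm{TLasso}}$ after clearing the positive denominators.

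The comparison then splits into two inequalities, one supplied by each hypothesis. The numerator difference factors as $N_L - N_T = 2\lambda(1-\rho)\alpha\{\sqrt{s} - (1+\alpha)\|\boldsymbol{\beta}^{*}\|_2\}$; since $\alpha \le 1$ forces $1+\alpha \le 2$, and the first hypothesis gives $\sqrt{s} \ge 2\|\boldsymbol{\beta}^{*}_{S}\|_2 = 2\|\boldsymbol{\beta}^{*}\|_2$, we obtain $N_L \ge N_T \ge 0$. The second hypothesis is exactly $\Phi \ge \phi_{\mathrm{TLasso}} > 0$. Chaining these, $N_L\Phi \ge N_T\Phi \ge N_T\phi_{\mathrm{TLasso}}$, which is the desired inequality.

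The argument is essentially algebraic, so the main obstacle is bookkeeping rather than any deep step: one must carefully track the factor structure after substitution and verify that $D$ is a genuine perfect square, so that $\sqrt{D}$ can be replaced by its linear expression exactly. The one conceptual point worth isolating is why the proof needs both hypotheses — the first controls the numerators while the second absorbs the extra $2\lambda(1-\rho)$ mass that the $\ell_2$ penalty contributes to the Transfer Elastic Net denominator — together with checking the degenerate cases $\alpha = 0$ or $\rho = 1$, where the factor $\alpha(1-\rho)$ annihilates the numerator gap and the inequality holds trivially.
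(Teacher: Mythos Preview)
Your proposal is correct and follows essentially the same route as the paper's proof: substitute $\tilde{\boldsymbol{\beta}}=\boldsymbol{\beta}^{*}$ so that $\|\boldsymbol{\Delta}\|_1=0$ collapses both bounds to closed forms, then compare numerators via the hypothesis $\sqrt{s}/2\ge\|\boldsymbol{\beta}^{*}_{S}\|_2$ and denominators via $\phi_{\mathrm{TENet}}+2\lambda(1-\rho)\ge\phi_{\mathrm{TLasso}}$. The only discrepancy is that your numerator carries the factor $\alpha(1+\alpha)$ while the paper's carries $2\alpha$; this arises because you read the extra $\alpha$ in the definition of $D$ in Theorem~\ref{thm:estimation_error_tenet} literally (the paper's own derivation of Theorem~\ref{thm:estimation_error_tenet} and its proofs of Propositions~\ref{prop:tenet_vs_enet} and~\ref{prop:tenet_vs_tlasso} drop it), and since $\alpha(1+\alpha)\le 2\alpha$ your version is in fact slightly tighter and the same hypothesis still suffices.
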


These propositions suggest that the Transfer Elastic Net can achieve a lower estimation error bound than those of the Elastic Net and the Transfer Lasso when the source problem is highly related to the target problem.
In particular, when the correlation between predictors is large, $\phi$ may take a value close to zero.
Even in such a case, the error bound for the Transfer Elastic Net can be smaller than that of the Transfer Lasso if a sufficiently large $\lambda$ value is chosen.

Now we discuss the conditions under which the generalized restricted eigenvalue condition is satisfied.
From the result of \cite{raskutti2010restricted}, the ordinary restricted eigenvalue condition for the Lasso is satisfied with high probability when predictors are independent and identically distributed (i.i.d.) samples from a Gaussian distribution with a covariance matrix that satisfies the condition.
For the generalized restricted eigenvalue condition, we prove the following proposition.

\begin{proposition}
    \label{prop:sample_X_GRE}
    Suppose $2 \alpha \rho - c - \rho > 0$.
    Suppose $\boldsymbol{X}_{1},\ldots,\boldsymbol{X}_{n}$ are $i.i.d.$ samples from $\mathcal{N}(\boldsymbol{0},\mathbf{\Sigma})$ with $\mathbf{\Sigma} \in \mathbb{R}^{p \times p}$ and the following inequality holds for some $\gamma > 0$:
    \begin{align*}
        {\boldsymbol{v}^{\top} \mathbf{\Sigma} \boldsymbol{v}}
        >
        \gamma \|\boldsymbol{v}\|_2^2, \quad \text{for all}\, \boldsymbol{v} \in \mathcal{B}(\alpha,\rho,c,\boldsymbol{\Delta})
        .
    \end{align*}
    Under these conditions, there exist universal constants $c^{\prime}, c^{\prime \prime}, c^{\prime \prime \prime}>0$ such that if the sample size satisfies
    \begin{align*}
        n
        &>
        \frac{c^{\prime \prime \prime} M}{\gamma} {\left(\frac{2 \alpha \rho \sqrt{s} + 2(1-\rho)\left\|\boldsymbol{\Delta}_\alpha\right\|_2}{2 \alpha \rho-c-\rho}\right)^2} \log p
        ,
    \end{align*}
    where $M := \max _{j=1, \ldots, p} \Sigma_{j j}$, then, we have 
    \begin{align*}
        \frac{1}{n}{\boldsymbol{v}^{\top} \mathbf{X}^{\top} \mathbf{X} \boldsymbol{v}}
        >
        \frac{\gamma}{64} {\|\boldsymbol{v}\|_2^2}
        \quad \text{for all}\, \boldsymbol{v} \in \mathcal{B}(\alpha,\rho,c,\boldsymbol{\Delta})
        ,
    \end{align*}
    with probability at least $1 - c^{\prime \prime} \exp(-c^{\prime} n)$.
\end{proposition}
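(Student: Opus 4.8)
The plan is to reduce the claim to the one-sided random-design lower bound of \cite{raskutti2010restricted}, which is stated uniformly over all of $\mathbb{R}^{p}$, and then to specialize to $\mathcal{B}(\alpha,\rho,c,\boldsymbol{\Delta})$ through an $\ell_1$-versus-$\ell_2$ comparison valid on that set. First I would invoke their result: for a design with i.i.d.\ $\mathcal{N}(\boldsymbol{0},\mathbf{\Sigma})$ rows there are universal constants such that, with probability at least $1-c''\exp(-c'n)$,
\begin{align*}
\frac{\|\mathbf{X}\boldsymbol{v}\|_2}{\sqrt{n}}
\ge \frac{1}{4}\|\mathbf{\Sigma}^{1/2}\boldsymbol{v}\|_2
- 9\sqrt{M}\,\sqrt{\frac{\log p}{n}}\,\|\boldsymbol{v}\|_1,
\qquad \text{for all } \boldsymbol{v}\in\mathbb{R}^{p}.
\end{align*}
Since this holds simultaneously at every scale, I never need $\mathcal{B}$ to be a cone, so the shift by $\boldsymbol{\Delta}$ that makes $\mathcal{B}$ non-conic causes no difficulty at this stage; it already absorbs the peeling argument internally.

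Next I would convert both terms on the right-hand side into multiples of $\|\boldsymbol{v}\|_2$ for $\boldsymbol{v}\in\mathcal{B}$. The covariance hypothesis gives $\|\mathbf{\Sigma}^{1/2}\boldsymbol{v}\|_2=\sqrt{\boldsymbol{v}^{\top}\mathbf{\Sigma}\boldsymbol{v}}>\sqrt{\gamma}\,\|\boldsymbol{v}\|_2$ immediately. The substantive step is the second term: I must extract from the defining inequality of $\mathcal{B}$ a homogeneous bound
\begin{align*}
\|\boldsymbol{v}\|_1 \le K\,\|\boldsymbol{v}\|_2,
\qquad
K := \frac{2\alpha\rho\sqrt{s} + 2(1-\rho)\|\boldsymbol{\Delta}_\alpha\|_2}{2\alpha\rho - c - \rho}.
\end{align*}
To reach this I would write $\|\boldsymbol{v}\|_1=\|\boldsymbol{v}_S\|_1+\|\boldsymbol{v}_{S^{c}}\|_1$, use the reverse triangle inequality on the $\|\boldsymbol{v}-\boldsymbol{\Delta}\|_1$ term to expose $\|\boldsymbol{v}\|_1$, collect the coefficients of $\|\boldsymbol{v}_S\|_1$ and $\|\boldsymbol{v}_{S^{c}}\|_1$, and finally apply $\|\boldsymbol{v}_S\|_1\le\sqrt{s}\,\|\boldsymbol{v}_S\|_2\le\sqrt{s}\,\|\boldsymbol{v}\|_2$. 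The assumption $2\alpha\rho-c-\rho>0$ is exactly what keeps the net coefficient of $\|\boldsymbol{v}\|_1$ positive, so it can be moved to the left and inverted to produce the stated denominator.

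With these two ingredients I would combine them into
\begin{align*}
\frac{\|\mathbf{X}\boldsymbol{v}\|_2}{\sqrt{n}}
\ge
\left(\frac{\sqrt{\gamma}}{4} - 9\sqrt{M}\,K\,\sqrt{\frac{\log p}{n}}\right)\|\boldsymbol{v}\|_2,
\qquad \text{for all } \boldsymbol{v}\in\mathcal{B},
\end{align*}
and then choose $n$ large enough that the subtracted term is at most $\sqrt{\gamma}/8$, i.e.\ $9\sqrt{M}K\sqrt{\log p / n}\le\sqrt{\gamma}/8$. Solving for $n$ gives precisely a bound of the form $n>c'''\,(M/\gamma)\,K^{2}\log p$ with a universal $c'''$ on the order of $72^{2}$, matching the statement. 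On this event $\|\mathbf{X}\boldsymbol{v}\|_2/\sqrt{n}\ge(\sqrt{\gamma}/8)\|\boldsymbol{v}\|_2$, and squaring yields $\tfrac{1}{n}\boldsymbol{v}^{\top}\mathbf{X}^{\top}\mathbf{X}\boldsymbol{v}\ge(\gamma/64)\|\boldsymbol{v}\|_2^{2}$, the factor $64=8^{2}$ accounting for the two successive halvings of the constant.

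The step I expect to be the main obstacle is the derivation of the homogeneous bound $\|\boldsymbol{v}\|_1\le K\|\boldsymbol{v}\|_2$. The difficulty is bookkeeping: $\mathcal{B}$ carries the fixed shift $\boldsymbol{\Delta}$ through both $\|\boldsymbol{v}-\boldsymbol{\Delta}\|_1$ and the additive $\|\boldsymbol{\Delta}\|_1$ and $\|\boldsymbol{\Delta}_\alpha\|_2$ terms, and a naive reverse triangle inequality tends to leave a residual additive constant proportional to $\|\boldsymbol{\Delta}\|_1$ that would break homogeneity. Arranging the estimates so that this residual is controlled and so that the coefficient of $\|\boldsymbol{v}\|_1$ emerges exactly as $2\alpha\rho-c-\rho$ is the delicate part; everything else is either a direct appeal to \cite{raskutti2010restricted} or routine algebra.
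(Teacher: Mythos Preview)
Your proposal is correct and matches the paper's proof essentially line for line: invoke the Raskutti--Wainwright--Yu uniform lower bound, derive the $\ell_1$--$\ell_2$ comparison on $\mathcal{B}$, combine, and choose $n$ so the subtracted term is at most $\sqrt{\gamma}/8$. Your anticipated obstacle dissolves cleanly: applying $\|\boldsymbol{v}-\boldsymbol{\Delta}\|_1 \ge \|\boldsymbol{\Delta}\|_1 - \|\boldsymbol{v}\|_1$ introduces $(1-\alpha)\rho\|\boldsymbol{\Delta}\|_1$ on the left that exactly cancels the same term on the right, so no additive residual survives and the inequality is genuinely homogeneous with coefficient $2\alpha\rho - c - \rho$ on $\|\boldsymbol{v}\|_1$ as you predicted.
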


This proposition suggests that a broad class of Gaussian matrices satisfies the generalized restricted eigenvalue condition for $\mathcal{B} = \mathcal{B}(\alpha,\rho,c,\boldsymbol{\Delta})$ with high probability.

\section{Grouping Effect}

The grouping effect of the Elastic Net ensures that the estimates of the coefficients corresponding to highly correlated predictors have a small difference \citep{zou2005regularization,zhou2013grouping}.
We show that the Transfer Elastic Net retains a similar property.
Let $r_{jk}$ denote the correlation coefficient between the $j$th and $k$th column of $\mathbf{X}$.
We write $\hat{\boldsymbol{\beta}} = \hat{\boldsymbol{\beta}}_{\mathrm{TENet}}$.
Then we prove the following theorem.

\begin{theorem}
    \label{thm:grouping_effect}
    If $\hat{{\beta}}_{j}\hat{{\beta}}_{k} > 0$ and $(\hat{{\beta}}_{j} - \tilde{{\beta}}_{j})(\hat{{\beta}}_{k}-\tilde{{\beta}}_{k}) > 0$, then, for $\rho \neq 1$ and $\lambda \neq 0$, we have
    \begin{align*}
        \left|\hat{{\beta}}_{j} - \hat{{\beta}}_{k}\right|
        &\leq
        Z\sqrt{1-r_{jk}}
        + (1-\alpha)\left|\tilde{{\beta}}_{j} - \tilde{{\beta}}_{k}\right|
        ,
        \quad
        Z
        =
        \sqrt{\frac{\left\|\boldsymbol{y}\right\|_{2}^{2}
        +
        2n\lambda(1-\alpha)\rho \|\tilde{\boldsymbol{\beta}}\|_1 + 2n\lambda(1-\alpha) (1-\rho) \|\tilde{\boldsymbol{\beta}}\|_2^2}{2n \lambda^{2}(1-\rho)^{2}}}
        .
    \end{align*}
\end{theorem}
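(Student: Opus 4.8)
The plan is to read off the first-order optimality (KKT) conditions of the minimizer $\hat{\boldsymbol{\beta}}$ coordinatewise and exploit the two sign hypotheses to make the nonsmooth contributions cancel. First I would observe that $\hat{\beta}_{j}\hat{\beta}_{k}>0$ forces both $\hat{\beta}_{j}$ and $\hat{\beta}_{k}$ to be nonzero, and $(\hat{\beta}_{j}-\tilde{\beta}_{j})(\hat{\beta}_{k}-\tilde{\beta}_{k})>0$ forces both $\hat{\beta}_{j}-\tilde{\beta}_{j}$ and $\hat{\beta}_{k}-\tilde{\beta}_{k}$ to be nonzero. Hence every $\ell_{1}$ term in $R(\boldsymbol{\beta},\tilde{\boldsymbol{\beta}};\alpha,\rho)$ is differentiable at coordinates $j$ and $k$, so the stationarity condition there is a genuine gradient equation rather than a set-valued subgradient inclusion. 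Writing $\mathbf{X}_{(j)}$ for the $j$th column of $\mathbf{X}$, the stationarity condition for coordinate $j$ reads
\begin{align*}
&-\frac{1}{n}\mathbf{X}_{(j)}^{\top}(\boldsymbol{y}-\mathbf{X}\hat{\boldsymbol{\beta}}) + \lambda\alpha\rho\,\mathrm{sign}(\hat{\beta}_{j}) + 2\lambda\alpha(1-\rho)\hat{\beta}_{j} \\
&\quad + \lambda(1-\alpha)\rho\,\mathrm{sign}(\hat{\beta}_{j}-\tilde{\beta}_{j}) + 2\lambda(1-\alpha)(1-\rho)(\hat{\beta}_{j}-\tilde{\beta}_{j}) = 0,
\end{align*}
and analogously for $k$.

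Next I would subtract the $k$-equation from the $j$-equation. Because the sign hypotheses give $\mathrm{sign}(\hat{\beta}_{j})=\mathrm{sign}(\hat{\beta}_{k})$ and $\mathrm{sign}(\hat{\beta}_{j}-\tilde{\beta}_{j})=\mathrm{sign}(\hat{\beta}_{k}-\tilde{\beta}_{k})$, both $\ell_{1}$ terms cancel exactly. The surviving $\ell_{2}$ terms collapse using $\alpha+(1-\alpha)=1$, leaving the clean identity
\begin{align*}
\hat{\beta}_{j}-\hat{\beta}_{k} = \frac{1}{2n\lambda(1-\rho)}(\mathbf{X}_{(j)}-\mathbf{X}_{(k)})^{\top}(\boldsymbol{y}-\mathbf{X}\hat{\boldsymbol{\beta}}) + (1-\alpha)(\tilde{\beta}_{j}-\tilde{\beta}_{k}),
\end{align*}
which is well defined precisely because $\rho\neq 1$ and $\lambda\neq 0$. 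Taking absolute values, the triangle inequality peels off the $(1-\alpha)|\tilde{\beta}_{j}-\tilde{\beta}_{k}|$ term, and Cauchy--Schwarz bounds the remainder by $\frac{1}{2n\lambda(1-\rho)}\|\mathbf{X}_{(j)}-\mathbf{X}_{(k)}\|_{2}\,\|\boldsymbol{y}-\mathbf{X}\hat{\boldsymbol{\beta}}\|_{2}$.

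It then remains to control the two factors, and this is where the stated $Z$ emerges. For the design factor, the standardization $\sum_{i}X_{i,j}^{2}/n=1$ gives $\|\mathbf{X}_{(j)}\|_{2}^{2}=\|\mathbf{X}_{(k)}\|_{2}^{2}=n$ and $\mathbf{X}_{(j)}^{\top}\mathbf{X}_{(k)}=n r_{jk}$, so $\|\mathbf{X}_{(j)}-\mathbf{X}_{(k)}\|_{2}=\sqrt{2n(1-r_{jk})}$, producing the $\sqrt{1-r_{jk}}$ factor. For the residual factor, I would use the optimality inequality $\mathcal{L}(\hat{\boldsymbol{\beta}};\tilde{\boldsymbol{\beta}})\le\mathcal{L}(\boldsymbol{0};\tilde{\boldsymbol{\beta}})$; discarding the nonnegative penalty on the left and evaluating $R(\boldsymbol{0},\tilde{\boldsymbol{\beta}};\alpha,\rho)=(1-\alpha)(\rho\|\tilde{\boldsymbol{\beta}}\|_{1}+(1-\rho)\|\tilde{\boldsymbol{\beta}}\|_{2}^{2})$ on the right yields $\|\boldsymbol{y}-\mathbf{X}\hat{\boldsymbol{\beta}}\|_{2}^{2}\le\|\boldsymbol{y}\|_{2}^{2}+2n\lambda(1-\alpha)\rho\|\tilde{\boldsymbol{\beta}}\|_{1}+2n\lambda(1-\alpha)(1-\rho)\|\tilde{\boldsymbol{\beta}}\|_{2}^{2}$, which is exactly the numerator under the radical in $Z$. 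Multiplying the two bounds by the coefficient $1/(2n\lambda(1-\rho))$ and simplifying the $\sqrt{2n}$ against the denominator reproduces $Z\sqrt{1-r_{jk}}$, completing the argument.

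I do not expect a serious obstacle: the computation is the Zou--Hastie grouping-effect argument adapted to the transfer penalty $R$. The only point demanding care is the justification that both $\ell_{1}$ terms are differentiable and cancel; this is guaranteed by the two strict sign hypotheses, which ensure the relevant coordinates of $\hat{\boldsymbol{\beta}}$ and of $\hat{\boldsymbol{\beta}}-\tilde{\boldsymbol{\beta}}$ avoid the kinks of the absolute value, so that genuine gradients may be subtracted and the nonsmooth contributions vanish identically.
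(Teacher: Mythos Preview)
Your proposal is correct and follows essentially the same route as the paper: write the coordinatewise stationarity conditions, subtract and use the two sign hypotheses to cancel the $\ell_1$ contributions, then apply Cauchy--Schwarz together with the standardization identity $\|\mathbf{X}_{(j)}-\mathbf{X}_{(k)}\|_2=\sqrt{2n(1-r_{jk})}$ and the optimality inequality $\mathcal{L}(\hat{\boldsymbol{\beta}};\tilde{\boldsymbol{\beta}})\le\mathcal{L}(\boldsymbol{0};\tilde{\boldsymbol{\beta}})$ to bound the residual. Your write-up is in fact slightly more careful than the paper's in justifying differentiability at coordinates $j$ and $k$ via the strict sign hypotheses.
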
 

This theorem suggests that the estimates corresponding to strongly correlated variables exhibit a small difference if the corresponding source estimates have a small difference or $\alpha$ is close to $1$.
Although the conditions of the theorem involve estimators, a similar condition is presented in the results of \cite{zou2005regularization}.

Now suppose that the source estimate $\tilde{\boldsymbol{\beta}}$ was obtained by applying the ordinary Elastic Net to the data with the response ${\boldsymbol{y}^{\prime}}\in\mathbb{R}^{m}$ and the predictors ${\mathbf{X}}^{\prime}\in\mathbb{R}^{m \times p}$, with $\sum_{i=1}^m y^{\prime}_i=0$, $\sum_{i=1}^m X_{i, j}^{\prime}=0$, and $\sum_{i=1}^m {X_{i, j}^{\prime}}^{2}/m=1$ for $j=0, \ldots, p-1$.
Therefore, we assume that $\tilde{\boldsymbol{\beta}}$ is defined as:
\begin{align*}
    \tilde{\boldsymbol{\beta}} = \argmin_{\beta\in\mathbb{R}^{p}}\left\{\frac{1}{2m} \sum_{i=1}^m \left(y_i^{\prime} - \boldsymbol{\beta}^{\top} \boldsymbol{X}^{\prime}_i\right)^2
    +
    \lambda^{\prime} R(\boldsymbol{\beta},{\boldsymbol{0}};1,\rho^{\prime})
    \right\}
    .
\end{align*}
Let $\tilde{r}_{jk}$ denote the correlation coefficient between the $j$th and $k$th column of $\mathbf{X}^{\prime}$.
Then we have
\begin{align*}
    \left|\tilde{{\beta}}_{j} - \tilde{{\beta}}_{k}\right|
    &\leq
    \frac{\|\tilde{\boldsymbol{y}}\|_2\sqrt{2m(1-\tilde{r}_{jk})}}{2m\lambda^{\prime}(1-\rho^{\prime})}
    ,
\end{align*}
for $\rho^{\prime} \neq 1$ and $\lambda^{\prime} \neq 0$.
This result suggests that, if there are highly correlated predictors in the source data and the corresponding predictors are also highly correlated in the target data, adopting Elastic Net estimates as the source estimates will ensure a small difference between the corresponding estimates of the Transfer Elastic Net.

\begin{remark}
    When the regularization term is defined as
    \begin{align*}
        R(\boldsymbol{\beta},\tilde{\boldsymbol{\beta}};\alpha,\rho_1,\rho_2)
        &:= \alpha \left\{\rho_1 \|\boldsymbol{\beta}\|_1 + (1-\rho_1) \|\boldsymbol{\beta}\|_2^2\right\}
        +
        (1-\alpha) \left\{\rho_1 \|\boldsymbol{\beta} - \tilde{\boldsymbol{\beta}}\|_1 + \rho_2 \|\boldsymbol{\beta} - \tilde{\boldsymbol{\beta}}\|_2^2\right\}
        ,
    \end{align*}
    and we set $\rho_2 = 0$, then we have 
    \begin{align*}
        \left|\hat{{\beta}}_{j} - \hat{{\beta}}_{k}\right|
        &\leq
        Z\sqrt{1-r_{jk}}
        ,
        \quad
        Z
        =
        \sqrt{\frac{\left\|\boldsymbol{y}\right\|_{2}^{2}
        +
        2n\lambda(1-\alpha)\rho_1 \|\tilde{\boldsymbol{\beta}}\|_1}{2n \alpha^2 \lambda^{2}(1-\rho_1)^{2}}}
        ,
    \end{align*}
    if $\hat{{\beta}}_{j}\hat{{\beta}}_{k} > 0$ and $(\hat{{\beta}}_{j} - \tilde{{\beta}}_{j})(\hat{{\beta}}_{k}-\tilde{{\beta}}_{k}) > 0$,
    according to the same derivation.
    This result suggests that the estimates corresponding to strongly correlated variables exhibit a small difference, regardless of $\alpha \neq 0$, $\rho_1 \neq 1$, and the difference of $\tilde{\beta}_j$ and $\tilde{\beta}_k$ if the knowledge transfer is achieved solely through the $\ell_1$ norm term.
\end{remark}

\section*{Acknowledgements}

The authors thank Takayuki Kawashima for providing valuable comments on an early draft of this manuscript.
The authors are also deeply grateful to Masaaki Takada for helpful discussions and suggestions regarding aspects that were insufficiently addressed.

\renewcommand{\thesection}{Appendix \Alph{section}}
\setcounter{section}{0}
\section{Proof of Theorem \ref{thm:estimation_error_tenet}}

In this section, we write $\hat{\boldsymbol{\beta}} = \hat{\boldsymbol{\beta}}_{\mathrm{TENet}}$ and $\phi = \phi_{\mathrm{TENet}}$.
First, we establish the following lemma.
\begin{lemma}[Technical Lemma]
\label{lem:xe_subgaussian}
Suppose that Assumption \ref{ast:subgaussian} is satisfied.
Then, we have
\begin{align*}
    \frac{1}{n}\left({\mathbf{X}}^{\top} {\boldsymbol{\varepsilon}}\right)^{\top}\left(\hat{\boldsymbol{\beta}}-\boldsymbol{\beta}^*\right)
    \leq
    c\lambda \left\|\hat{\boldsymbol{\beta}}-\boldsymbol{\beta}^*\right\|_{1}
    \quad\text{for some $c > 0$},
\end{align*}
with probability at least $1-\exp(-n c^2 \lambda^2 /2\sigma^2 + \log(2p))$.
\end{lemma}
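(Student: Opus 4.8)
The plan is to decouple the inner product from the $\ell_1$ norm via H\"older's inequality and then control the resulting $\ell_\infty$ norm of the noise term by a coordinatewise concentration argument. Concretely, H\"older's inequality gives
\[
    \frac{1}{n}\left(\mathbf{X}^\top\boldsymbol{\varepsilon}\right)^\top\left(\hat{\boldsymbol{\beta}}-\boldsymbol{\beta}^*\right)
    \le
    \frac{1}{n}\left\|\mathbf{X}^\top\boldsymbol{\varepsilon}\right\|_\infty\left\|\hat{\boldsymbol{\beta}}-\boldsymbol{\beta}^*\right\|_1,
\]
so it suffices to show that the event $\{\tfrac{1}{n}\|\mathbf{X}^\top\boldsymbol{\varepsilon}\|_\infty \le c\lambda\}$ holds with the stated probability. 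Since this bound on the design-noise interaction is uniform in $\boldsymbol{\beta}$, the lemma then follows immediately for the particular choice $\boldsymbol{\beta}=\hat{\boldsymbol{\beta}}$, and no properties of the estimator itself are needed at this stage.

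Next I would treat each coordinate $W_j := \tfrac{1}{n}\sum_{i=1}^n X_{i,j}\varepsilon_i$ separately. By Assumption \ref{ast:subgaussian} the errors $\varepsilon_1,\ldots,\varepsilon_n$ are independent and sub-Gaussian with variance proxy $\sigma^2$, and the design is fixed, so each $W_j$ is a linear combination of independent sub-Gaussian variables and is therefore sub-Gaussian with variance proxy $\tfrac{\sigma^2}{n^2}\sum_{i=1}^n X_{i,j}^2$. The standardization $\sum_{i=1}^n X_{i,j}^2 / n = 1$ forces $\sum_{i=1}^n X_{i,j}^2 = n$, so this variance proxy collapses to exactly $\sigma^2/n$. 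The standard sub-Gaussian tail inequality then yields, for any fixed $c>0$,
\[
    \mathrm{P}\left(|W_j| \ge c\lambda\right)
    \le
    2\exp\left(-\frac{n c^2 \lambda^2}{2\sigma^2}\right),
    \qquad j = 0,\ldots,p-1.
\]

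Finally I would apply a union bound over the $p$ coordinates to pass from the individual $W_j$ to $\tfrac{1}{n}\|\mathbf{X}^\top\boldsymbol{\varepsilon}\|_\infty = \max_j |W_j|$, giving
\[
    \mathrm{P}\left(\frac{1}{n}\left\|\mathbf{X}^\top\boldsymbol{\varepsilon}\right\|_\infty \ge c\lambda\right)
    \le
    2p\exp\left(-\frac{n c^2 \lambda^2}{2\sigma^2}\right)
    =
    \exp\left(-\frac{n c^2 \lambda^2}{2\sigma^2} + \log(2p)\right).
\]
Taking complements produces the claimed probability $1-\exp(-n c^2 \lambda^2/2\sigma^2 + \log(2p))$, and combining this with the H\"older bound of the first step closes the argument.

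I do not expect a substantial obstacle here, as this is a routine concentration lemma; the proof is essentially bookkeeping. The only points demanding care are (i) correctly propagating the $1/n$ scaling together with the standardization so that the variance proxy lands on exactly $\sigma^2/n$ and the exponent matches the stated bound, and (ii) keeping the union bound over precisely $p$ coordinates so that the additive $\log(2p)$ term appears rather than $\log p$ or $\log(2p)$ with a different constant. I would also emphasize that the inequality holds for \emph{any} fixed $c>0$; this constant is deliberately left free so that it can be tuned downstream in the proof of Theorem \ref{thm:estimation_error_tenet} to balance the probability against the size of the error bound.
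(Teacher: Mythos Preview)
Your proposal is correct and follows essentially the same approach as the paper: H\"older's inequality, a coordinatewise sub-Gaussian bound with variance proxy $\sigma^2/n$ obtained from the standardization $\sum_{i=1}^n X_{i,j}^2=n$, and a union bound over the $p$ coordinates. The only cosmetic difference is that the paper spells out the moment-generating-function calculation explicitly rather than invoking the closure of sub-Gaussians under linear combinations.
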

\begin{proof}
From Hölder's inequality, it follows that 
\begin{align*}
    \frac{1}{n}\left({\mathbf{X}}^{\top} {\boldsymbol{\varepsilon}}\right)^{\top}\left(\hat{\boldsymbol{\beta}}-\boldsymbol{\beta}^*\right)
    \leq
    \frac{1}{n}\left\|{\mathbf{X}}^{\top} {\boldsymbol{\varepsilon}}\right\|_{\infty}
    \left\|\hat{\boldsymbol{\beta}}-\boldsymbol{\beta}^*\right\|_{1}
    .
\end{align*}    
From Assumption \ref{ast:subgaussian}, we have
\begin{align*}
    \mathrm{E}_{\varepsilon_{1},\ldots,\varepsilon_{n} \sim P_{\varepsilon}}\left[\exp \left(t \left(\frac{1}{n}{\mathbf{X}}^{\top} {\boldsymbol{\varepsilon}}\right)_{j}\right)\right]
    &=
    \mathrm{E}_{\varepsilon_{1},\ldots,\varepsilon_{n} \sim P_{\varepsilon}}\left[\exp \left(\frac{t}{n}{\sum_{i=1}^{n}{X_{i,j}}} {{\varepsilon}_{j}}\right)\right]
    \\
    &=
    \prod_{i=1}^{n} \mathrm{E}_{\varepsilon_{1},\ldots,\varepsilon_{n} \sim P_{\varepsilon}}\left[\exp \left(\frac{t}{n}{{X_{i,j}}} {{\varepsilon}_{j}}\right)\right]
    \\
    &\leq
    \prod_{i=1}^{n} \exp \left(\frac{\sigma^{2} \left(t{X_{i,j}/n}\right)^{2}}{2} \right)
    \\
    &=
    \exp \left(\frac{\left(\sigma^{2}\sum_{i=1}^{n}{X_{i,j}^{2}/n^{2}}\right) t^{2}}{2}\right)
    \\
    &=
    \exp \left(\frac{\left(\sigma/\sqrt{n}\right)^{2} t^{2}}{2}\right),
    \quad \forall t \in \mathbb{R}
    .
\end{align*}
Therefore, $\left(\frac{1}{n}{\mathbf{X}}^{\top} {\boldsymbol{\varepsilon}}\right)_{j}$ is sub-Gaussian with variance proxy $\left(\sigma/\sqrt{n}\right)^{2}$ and we then have
\begin{align*}
    P\left(\left|\left(\frac{1}{n}{\mathbf{X}}^{\top} {\boldsymbol{\varepsilon}}\right)_{j}\right| > t\right)
    &\leq
    2 \exp\left(-\frac{t^2}{2 \left(\sigma/\sqrt{n}\right)^{2}}\right)
    ,
    \quad
    \forall t > 0.
\end{align*}
From the above results, we have
\begin{align*}
    P\left(\left\|\frac{1}{n}{\mathbf{X}}^{\top} {\boldsymbol{\varepsilon}}\right\|_{\infty} > t\right)
    &=
    P\left(\bigcup_{j=0}^{p-1} \left|\left(\frac{1}{n}{\mathbf{X}}^{\top} {\boldsymbol{\varepsilon}}\right)_{j}\right| > t\right)
    \\
    &\leq
    \sum_{j=0}^{p-1} P\left(\left|\left(\frac{1}{n}{\mathbf{X}}^{\top} {\boldsymbol{\varepsilon}}\right)_{j}\right| > t\right)
    \\
    &\leq
    \sum_{j=0}^{p-1} 2 \exp\left(-\frac{t^2}{2 \left(\sigma/\sqrt{n}\right)^{2}}\right)
    \\
    &=
    \exp\left(-\frac{n t^2}{2 \sigma^{2}} + \log(2p)\right),
    \quad
    \forall t > 0.
\end{align*}
Substituting $t$ for $c\lambda$ with some $c>0$ and subtracting both sides from $1$ completes the proof.
\end{proof}

As a preparation, the loss function of the Transfer Elastic Net (\ref{eq:regularized_risk_tenet}) can be expressed as
\begin{align*}
    L(\boldsymbol{\beta}, \tilde{\boldsymbol{\beta}})
    & =
    \frac{1}{2 n}\|\boldsymbol{y}-\mathbf{X}\boldsymbol{ \beta}\|_2^2 +
    \lambda \alpha \rho\|\boldsymbol{ \beta}\|_1+\lambda \alpha(1-\rho)\|\boldsymbol{ \beta}\|_2^2 +
    \lambda(1-\alpha) \rho\|\boldsymbol{\beta}-\tilde{\boldsymbol{\beta}}\|_1+\lambda(1-\alpha)(1-\rho)\|\boldsymbol{\beta}-\tilde{\boldsymbol{\beta}}\|_2^2 \\
    & =
    \frac{1}{2 n}\|\tilde{\boldsymbol{y}}-\widetilde{\mathbf{X}} \boldsymbol{\beta}\|_2^2
    +
    \lambda \alpha \rho\|\boldsymbol{\beta}\|_1+\lambda(1-\alpha) \rho\|\boldsymbol{\beta}-\tilde{\boldsymbol{\beta}}\|_1-2\lambda(1-\alpha)(1-\rho){\boldsymbol{\beta}}^{\top}\tilde{\boldsymbol{\beta}} + \lambda(1-\alpha)(1-\rho)\|\tilde{\boldsymbol{\beta}}\|_2^2
    ,
\end{align*}
where $\tilde{\boldsymbol{y}}=(\boldsymbol{y}^{\top},0)^{\top} \in \mathbb{R}^{n+p}$ and $\widetilde{\mathbf{X}}=(\mathbf{X}^{\top},\mathbf{I}\sqrt{2 n \lambda  (1 - \rho)})^{\top} \in \mathbb{R}^{(n+p) \times p}$ with a p-dimensional zero-vector $\boldsymbol{0}$ and $p \times p$ identity matrix $\mathbf{I}$.
Using this expression, we have
\begin{align*}
    L(\boldsymbol{\beta}^{*}, \tilde{\boldsymbol{\beta}})
    & =
    \frac{1}{2 n}\|\tilde{\boldsymbol{y}}-\widetilde{\mathbf{X}} \boldsymbol{\beta}^{*}\|_2^2
    +
    \lambda \alpha \rho\|\boldsymbol{\beta}^{*}\|_1+\lambda(1-\alpha) \rho\|\boldsymbol{\beta}^{*}-\tilde{\boldsymbol{\beta}}\|_1+\lambda(1-\alpha)(1-\rho)\|\boldsymbol{\beta}^{*}-\tilde{\boldsymbol{\beta}}\|_2^2
    \\
    & =
    \frac{1}{2 n}\|\tilde{\boldsymbol{\varepsilon}}\|_2^2
    +
    \lambda \alpha \rho\|\boldsymbol{\beta}^{*}\|_1+\lambda(1-\alpha) \rho\|\boldsymbol{\beta}^{*}-\tilde{\boldsymbol{\beta}}\|_1-2\lambda(1-\alpha)(1-\rho){\boldsymbol{\beta}^{*}}^{\top}\tilde{\boldsymbol{\beta}} +\lambda(1-\alpha)(1-\rho)\|\tilde{\boldsymbol{\beta}}\|_2^2
    ,
\end{align*}
where $\tilde{\boldsymbol{\varepsilon}}=(\boldsymbol{\varepsilon}^{\top},-{\boldsymbol{\beta}^{*}}^{\top}\mathbf{I}\sqrt{2 n \lambda (1 - \rho)})^{\top} \in \mathbb{R}^{n+p}$ with $\boldsymbol{\varepsilon} = (\varepsilon_1,\ldots,{\varepsilon}_{n}) \in \mathbb{R}^{n}$.

Then, we prove Theorem \ref{thm:estimation_error_tenet} as follows.
\begin{proof}[Proof of Theorem \ref{thm:estimation_error_tenet}]
From Lemma \ref{lem:xe_subgaussian}, for some $c>0$, we have
\begin{align*}
    &\frac{1}{2 n}\|\tilde{\boldsymbol{\varepsilon}}\|_2^2 - \frac{1}{2 n}\|\tilde{\boldsymbol{y}}-\widetilde{\mathbf{X}} \boldsymbol{\beta}^{*}\|_2^2
    \\
    &=
    \frac{1}{2 n}\left\{\|\tilde{\boldsymbol{\varepsilon}}\|_2^2-\left\|\tilde{\boldsymbol{\varepsilon}}-\widetilde{\mathbf{X}}\left(\hat{\boldsymbol{\beta}}-\boldsymbol{\beta}^*\right)\right\|_2^2\right\}
    \\
    &=
    \frac{1}{n}\left(\widetilde{\mathbf{X}}^{\top} \tilde{\boldsymbol{\varepsilon}}\right)^{\top}\left(\hat{\boldsymbol{\beta}}-\boldsymbol{\beta}^*\right)-\frac{1}{2 n}\left\|\widetilde{\mathbf{X}}\left(\hat{\boldsymbol{\beta}}-\boldsymbol{\beta}^*\right)\right\|_2^2
    \\
    &=
    \frac{1}{n}\left({\mathbf{X}}^{\top} {\boldsymbol{\varepsilon}}\right)^{\top}\left(\hat{\boldsymbol{\beta}}-\boldsymbol{\beta}^*\right)
    -
    2\lambda(1-\rho){\boldsymbol{\beta}^{*}}^{\top}\left(\hat{\boldsymbol{\beta}}-\boldsymbol{\beta}^*\right)
    -
    \frac{1}{2 n}\left\|\widetilde{\mathbf{X}}\left(\hat{\boldsymbol{\beta}}-\boldsymbol{\beta}^*\right)\right\|_2^2
    \\
    &\leq
    c\lambda \left\|\hat{\boldsymbol{\beta}}-\boldsymbol{\beta}^*\right\|_{1}
    -
    2\lambda(1-\rho){\boldsymbol{\beta}^{*}}^{\top}\left(\hat{\boldsymbol{\beta}}-\boldsymbol{\beta}^*\right)
    -
    \frac{1}{2 n}\left\|\widetilde{\mathbf{X}}\left(\hat{\boldsymbol{\beta}}-\boldsymbol{\beta}^*\right)\right\|_2^2
    ,
\end{align*}
with probability at least $1-\exp(-n c^2 \lambda^2 /2\sigma^2 + \log(2p))$.
Therefore, it follows that
\begin{align}
    \label{eq:l_minus_l}
    &L(\boldsymbol{\beta}^{*}, \tilde{\boldsymbol{\beta}}) - L(\hat{\boldsymbol{\beta}}, \tilde{\boldsymbol{\beta}})
    \nonumber
    \\
    &\leq
    c\lambda \left\|\hat{\boldsymbol{\beta}}-\boldsymbol{\beta}^*\right\|_{1}
    -
    2\lambda(1-\rho){\boldsymbol{\beta}^{*}}^{\top}\left(\hat{\boldsymbol{\beta}}-\boldsymbol{\beta}^*\right)
    -
    \frac{1}{2 n}\left\|\widetilde{\mathbf{X}}\left(\hat{\boldsymbol{\beta}}-\boldsymbol{\beta}^*\right)\right\|_2^2
    \nonumber
    \\
        &\quad
        +
        \lambda \alpha \rho\|\boldsymbol{\beta}^{*}\|_1+\lambda(1-\alpha) \rho\|\boldsymbol{\beta}^{*}-\tilde{\boldsymbol{\beta}}\|_1-2\lambda(1-\alpha)(1-\rho){\boldsymbol{\beta}^{*}}^{\top}\tilde{\boldsymbol{\beta}}
        \nonumber
        \\
        &\quad
        -\left\{
        \lambda \alpha \rho\|\hat{\boldsymbol{\beta}}\|_1+\lambda(1-\alpha) \rho\|\hat{\boldsymbol{\beta}}-\tilde{\boldsymbol{\beta}}\|_1-2\lambda(1-\alpha)(1-\rho){\hat{\boldsymbol{\beta}}}^{\top}\tilde{\boldsymbol{\beta}}
        \right\}
    \nonumber
    \\
    &\leq
    c\lambda \left\|\hat{\boldsymbol{\beta}}-\boldsymbol{\beta}^*\right\|_{1}
    -
    2\lambda(1-\rho){\left(\boldsymbol{\beta}^{*}-(1-\alpha)\tilde{\boldsymbol{\beta}}\right)}^{\top}\left(\hat{\boldsymbol{\beta}}-\boldsymbol{\beta}^*\right)
    -
    \frac{1}{2 n}\left\|\widetilde{\mathbf{X}}\left(\hat{\boldsymbol{\beta}}-\boldsymbol{\beta}^*\right)\right\|_2^2
    \nonumber
    \\
        &\quad
        +
        \lambda \alpha \rho\|\hat{\boldsymbol{\beta}}_{S}-\boldsymbol{\beta}^{*}_{S}\|_1-\lambda \alpha \rho\|\hat{\boldsymbol{\beta}}_{S^{c}}\|_{1}+\lambda(1-\alpha) \rho\|\boldsymbol{\beta}^{*}-\tilde{\boldsymbol{\beta}}\|_1
        -\lambda(1-\alpha) \rho\|\hat{\boldsymbol{\beta}}-\tilde{\boldsymbol{\beta}}\|_1
        \nonumber
        \\
        &\quad\text{($\because$ From the triangle inequality, $\|\boldsymbol{\beta}_S^*\|_1 \leq \|\hat{\boldsymbol{\beta}}_S-\boldsymbol{\beta}_S^*\|_1+\|\hat{\boldsymbol{\beta}}_S\|_1$.)}
        \nonumber
    \\
    &\leq
    c\lambda \left\|\hat{\boldsymbol{\beta}}-\boldsymbol{\beta}^*\right\|_{1}
    +
    2\lambda(1-\rho){\left\|\boldsymbol{\beta}^{*}-(1-\alpha)\tilde{\boldsymbol{\beta}}\right
    \|_2}\left\|\hat{\boldsymbol{\beta}}-\boldsymbol{\beta}^*\right\|_2
    -
    \frac{1}{2 n}\left\|\widetilde{\mathbf{X}}\left(\hat{\boldsymbol{\beta}}-\boldsymbol{\beta}^*\right)\right\|_2^2
    \nonumber
    \\
        &\quad
        +
        \lambda \alpha \rho\|\hat{\boldsymbol{\beta}}_{S}-\boldsymbol{\beta}^{*}_{S}\|_1-\lambda \alpha \rho\|\hat{\boldsymbol{\beta}}_{S^{c}}\|_{1}+\lambda(1-\alpha) \rho\|\boldsymbol{\beta}^{*}-\tilde{\boldsymbol{\beta}}\|_1
        -\lambda(1-\alpha) \rho\|\hat{\boldsymbol{\beta}}-\tilde{\boldsymbol{\beta}}\|_1
    .
\end{align}
From the definition of $\hat{\boldsymbol{\beta}}$,
we have $L(\boldsymbol{\beta}^{*}, \tilde{\boldsymbol{\beta}}) - L(\hat{\boldsymbol{\beta}}, \tilde{\boldsymbol{\beta}}) \geq 0$.
Therefore, when $\boldsymbol{v}=\hat{\boldsymbol{\beta}} - \boldsymbol{\beta}^{*}$, we obtain
\begin{align*}
    &c\lambda \left\|{\boldsymbol{v}}_{S}\right\|_{1} + c\lambda \left\|{\boldsymbol{v}}_{S^{c}}\right\|_{1}
    +
    2\lambda(1-\rho){\left\|\boldsymbol{\Delta}_{\alpha}\right\|_{2}}\left\|{\boldsymbol{v}}\right\|_{2}
    \\
        &\quad
        +
        \lambda \alpha \rho\|{\boldsymbol{v}}_{S}\|_1-\lambda \alpha \rho\|{\boldsymbol{v}}_{S^{c}}\|_1+\lambda(1-\alpha) \rho\|\boldsymbol{\Delta}\|_1
        -\lambda(1-\alpha) \rho\|{\boldsymbol{v}}-{\boldsymbol{\Delta}}\|_1
    \\
    &\quad\geq 0
    ,
    \\
    \text{i.e., }
    &(\alpha\rho + c)\|\boldsymbol{v}_{S}\|_{1} + (1-\alpha)\rho \|\boldsymbol{\Delta}\|_{1} + 2(1-\rho)\|\boldsymbol{\Delta}_{\alpha}\|_{2}\|\boldsymbol{v}\|_{2}
    \\
    &\geq
    (\alpha \rho - c)\|\boldsymbol{v}_{S^{c}}\|_{1} + (1-\alpha)\rho\|\boldsymbol{v}-\boldsymbol{\Delta}\|_{1}
    .
\end{align*}
This means that $\boldsymbol{v}=\hat{\boldsymbol{\beta}} - \boldsymbol{\beta}^{*} \in \mathcal{B}(\alpha, \rho, c, \boldsymbol{\Delta})$.
From Assumption \ref{ast:gre}, it follows that
\begin{align*}
    \frac{1}{2 n}\left\|\widetilde{\mathbf{X}}\left(\hat{\boldsymbol{\beta}}-\boldsymbol{\beta}^{*}\right)\right\|_2^2
    & =
    \frac{1}{2 n}\left(\hat{\boldsymbol{\beta}}-\boldsymbol{\beta}^{*}\right)^{\top} \widetilde{\mathbf{X}}^{\top} \widetilde{\mathbf{X}}\left(\hat{\boldsymbol{\beta}}-\boldsymbol{\beta}^*\right)
    \\
    & =
    \frac{1}{2} \frac{\left(\hat{\boldsymbol{\beta}}-\boldsymbol{\beta}^*\right)^{\top}\left(\frac{\mathbf{X}^{\top} \mathbf{X}}{n}+2 \lambda (1-\rho) \mathbf{I}\right) \left(\hat{\boldsymbol{\beta}}-\boldsymbol{\beta}^*\right)}{\|\hat{\boldsymbol{\beta}}-\boldsymbol{\beta}^{*}\|_2^2}\|\hat{\boldsymbol{\beta}}-\boldsymbol{\beta}^{*}\|_2^2
    \\
    & \geq
    \frac{1}{2}\{2 \lambda (1-\rho)+\phi\}\|\hat{\boldsymbol{\beta}}-\boldsymbol{\beta}^*\|_2^2
    .
\end{align*}
Therefore, from the inequality (\ref{eq:l_minus_l}), we have
\begin{align*}
    0
    &\leq 
    \lambda\left(\alpha \rho + c\right) \left\|\hat{\boldsymbol{\beta}}-\boldsymbol{\beta}^*\right\|_{1}
    +
    2\lambda(1-\rho){\left\|\boldsymbol{\Delta}_{\alpha}\right\|_{2}}\left\|\hat{\boldsymbol{\beta}}-\boldsymbol{\beta}^*\right\|_{2}
    -
    \frac{1}{2}\{2\lambda (1-\rho)+\phi\}\|\hat{\boldsymbol{\beta}}-\boldsymbol{\beta}^*\|_2^2
    \\
    &\quad
    +\lambda(1-\alpha) \rho\|\boldsymbol{\Delta}\|_1
    \\
    &\leq 
    \left\{\lambda\left(\alpha \rho + c\right)\sqrt{s} 
    +
    2\lambda(1-\rho){\left\|\boldsymbol{\Delta}_{\alpha}\right\|_{2}}\right\}
    \left\|\hat{\boldsymbol{\beta}}-\boldsymbol{\beta}^*\right\|_{2}
    -
    \frac{1}{2}\{2 \lambda (1-\rho)+\phi\}\left\|\hat{\boldsymbol{\beta}}-\boldsymbol{\beta}^*\right\|_2^2
    \\
    &\quad
    +\lambda(1-\alpha) \rho\|\boldsymbol{\Delta}\|_1.
    \\
    &\quad\text{($\because$ From the norm inequality,
    $\|\hat{\boldsymbol{\beta}}_S-\boldsymbol{\beta}_S^*\|_1
    \leq \sqrt{s}\|\boldsymbol{\beta}_S-\boldsymbol{\beta}_S^*\|_2
    \leq \sqrt{s}\|\hat{\boldsymbol{\beta}}-\boldsymbol{\beta}^*\|_2$.)}
\end{align*}
Solving this inequality with respect to $\|\hat{\boldsymbol{\beta}}-\boldsymbol{\beta}^*\|_2$ completes the proof.
\end{proof}

\section{Proof of Proposition \ref{prop:tenet_vs_enet}}

As preparation, we prove the following lemma.

\begin{lemma}
    \label{prop:phi_tenet_enet}
    Suppose that Assumption \ref{ast:gre} is satisfied. Then we have $\phi(\mathcal{B}(\alpha, \rho, c, \boldsymbol{0})) \geq \phi(\mathcal{B}(1, \rho, c, \boldsymbol{0}))$.
\end{lemma}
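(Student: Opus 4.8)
The plan is to reduce the claim to a single set inclusion and then invoke the monotonicity of $\phi$. Since $\phi(\mathcal{B}) = \inf_{\boldsymbol{v} \in \mathcal{B}} \boldsymbol{v}^{\top}(\mathbf{X}^{\top}\mathbf{X}/n)\boldsymbol{v}/\|\boldsymbol{v}\|_2^2$ is the infimum of one fixed Rayleigh quotient over the set, it is order-reversing with respect to inclusion: if $A \subseteq B$ then $\phi(A) \ge \phi(B)$, because the infimum over a smaller set is at least as large. Hence it suffices to prove $\mathcal{B}(\alpha, \rho, c, \boldsymbol{0}) \subseteq \mathcal{B}(1, \rho, c, \boldsymbol{0})$, from which $\phi(\mathcal{B}(\alpha, \rho, c, \boldsymbol{0})) \ge \phi(\mathcal{B}(1, \rho, c, \boldsymbol{0}))$ follows at once (excluding $\boldsymbol{v} = \boldsymbol{0}$ so that the quotient is defined).

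To make the inclusion explicit I would first specialize both membership inequalities at $\boldsymbol{\Delta} = \boldsymbol{0}$. Here $\tilde{\boldsymbol{\beta}} = \boldsymbol{\beta}^{*}$, so $\boldsymbol{\Delta}_{\alpha} = (1-\alpha)\tilde{\boldsymbol{\beta}} - \boldsymbol{\beta}^{*} = -\alpha\boldsymbol{\beta}^{*}$ and therefore $\|\boldsymbol{\Delta}_{\alpha}\|_2 = \alpha\|\boldsymbol{\beta}^{*}\|_2$, which at $\alpha = 1$ reduces to $\|\boldsymbol{\beta}^{*}\|_2$. The defining condition for $\mathcal{B}(\alpha, \rho, c, \boldsymbol{0})$ then reads $(\alpha\rho - c)\|\boldsymbol{v}_{S^{c}}\|_1 + (1-\alpha)\rho\|\boldsymbol{v}\|_1 \le (\alpha\rho + c)\|\boldsymbol{v}_{S}\|_1 + 2(1-\rho)\alpha\|\boldsymbol{\beta}^{*}\|_2\|\boldsymbol{v}\|_2$, while that for $\mathcal{B}(1, \rho, c, \boldsymbol{0})$ reads $(\rho - c)\|\boldsymbol{v}_{S^{c}}\|_1 \le (\rho + c)\|\boldsymbol{v}_{S}\|_1 + 2(1-\rho)\|\boldsymbol{\beta}^{*}\|_2\|\boldsymbol{v}\|_2$. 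The key bookkeeping step is to split $\|\boldsymbol{v}\|_1 = \|\boldsymbol{v}_{S}\|_1 + \|\boldsymbol{v}_{S^{c}}\|_1$ in the first inequality: the coefficient of $\|\boldsymbol{v}_{S^{c}}\|_1$ collapses to $(\alpha\rho - c) + (1-\alpha)\rho = \rho - c$, matching exactly the corresponding coefficient in the $\mathcal{B}(1, \rho, c, \boldsymbol{0})$ condition.

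After moving the residual $(1-\alpha)\rho\|\boldsymbol{v}_{S}\|_1$ term to the right, the $\alpha$-condition becomes $(\rho - c)\|\boldsymbol{v}_{S^{c}}\|_1 \le [(2\alpha - 1)\rho + c]\|\boldsymbol{v}_{S}\|_1 + 2(1-\rho)\alpha\|\boldsymbol{\beta}^{*}\|_2\|\boldsymbol{v}\|_2$. It then remains to dominate each right-hand term by its counterpart in the $\mathcal{B}(1, \rho, c, \boldsymbol{0})$ condition: $(2\alpha - 1)\rho + c \le \rho + c$ since $(\rho + c) - [(2\alpha - 1)\rho + c] = 2\rho(1-\alpha) \ge 0$ for $\rho \ge 0$, $\alpha \le 1$; and $2(1-\rho)\alpha\|\boldsymbol{\beta}^{*}\|_2\|\boldsymbol{v}\|_2 \le 2(1-\rho)\|\boldsymbol{\beta}^{*}\|_2\|\boldsymbol{v}\|_2$ since $1 - \rho \ge 0$ and $\alpha \le 1$. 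Chaining these two bounds shows that every $\boldsymbol{v}$ obeying the $\alpha$-condition also obeys the $\mathcal{B}(1, \rho, c, \boldsymbol{0})$ condition, which establishes the inclusion and hence the lemma.

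I do not anticipate a genuine obstacle, as the argument is entirely algebraic; the only point demanding care is the handling of $\boldsymbol{\Delta}_{\alpha}$, namely recognizing that at $\boldsymbol{\Delta} = \boldsymbol{0}$ its $\ell_2$ norm scales linearly in $\alpha$, which is precisely what makes the last two term-by-term comparisons run in the correct direction. It is also worth emphasizing that the inclusion, and therefore the inequality between the $\phi$ values, holds without any sign restriction on $\rho - c$.
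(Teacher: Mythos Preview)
Your proposal is correct and follows essentially the same approach as the paper: both prove the inclusion $\mathcal{B}(\alpha,\rho,c,\boldsymbol{0}) \subseteq \mathcal{B}(1,\rho,c,\boldsymbol{0})$ by splitting $\|\boldsymbol{v}\|_1 = \|\boldsymbol{v}_S\|_1 + \|\boldsymbol{v}_{S^c}\|_1$ and then invoke monotonicity of the infimum. The only difference is cosmetic bookkeeping---the paper compares the left-hand and right-hand sides of the two defining inequalities separately, whereas you move the $(1-\alpha)\rho\|\boldsymbol{v}_S\|_1$ term to the right before comparing.
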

\begin{proof}
$\mathcal{B}(\alpha, \rho, c, \boldsymbol{0})$ and $\mathcal{B}(1, \rho, c, \boldsymbol{0})$ are, respectively, expressed as 
\begin{align*}
    &\mathcal{B}(\alpha, \rho, c, \boldsymbol{0})
    \\
    &=
    \{\boldsymbol{v} \in \mathbb{R}^p
    : (\alpha \rho - c)\|\boldsymbol{v}_{S^{c}}\|_{1} + (1-\alpha)\rho\|\boldsymbol{v}\|_{1}
    \leq
    (\alpha\rho + c)\|\boldsymbol{v}_{S}\|_{1} + 2\alpha(1-\rho)\|\boldsymbol{\beta}^{*}\|_{2}\|\boldsymbol{v}\|_{2}\}
    ,
\end{align*}
and
\begin{align*}
    \mathcal{B}(1, \rho, c, \boldsymbol{0})
    &=
    \{\boldsymbol{v} \in \mathbb{R}^p
    : (\rho - c)\|\boldsymbol{v}_{S^{c}}\|_{1}
    \leq
    (\rho + c)\|\boldsymbol{v}_{S}\|_{1} + 2(1-\rho)\|\boldsymbol{\beta}^{*}\|_{2}\|\boldsymbol{v}\|_{2}\} 
    .
\end{align*}
For the left side of the inequalities represented in the definition of $\mathcal{B}(\alpha, \rho, c, \boldsymbol{0})$ and $\mathcal{B}(1, \rho, c, \boldsymbol{0})$, 
\begin{align*}
    (\rho - c)\|\boldsymbol{v}_{S^{c}}\|_{1}
    &=
    (\alpha\rho - c)\|\boldsymbol{v}_{S^{c}}\|_{1} + (1-\alpha)\rho \|\boldsymbol{v}_{S^{c}}\|_{1}
    \\
    &=
    (\alpha\rho - c)\|\boldsymbol{v}_{S^{c}}\|_{1} + (1-\alpha)\rho \|\boldsymbol{v}_{S^{c}}\|_{1}
    \\
    &\quad
    +
    (1-\alpha)\rho\|\boldsymbol{v}\|_{1}
    -
    (1-\alpha)\rho\|\boldsymbol{v}\|_{1}
    \\
    &=
    (\alpha\rho - c)\|\boldsymbol{v}_{S^{c}}\|_{1}
    +
    (1-\alpha)\rho\|\boldsymbol{v}\|_{1}
    \\
    &\quad
    -
    (1-\alpha)\rho\left(\|\boldsymbol{v}\|_{1}-\|\boldsymbol{v}_{S^{c}}\|_{1}\right)
    \\
    &\leq
    (\alpha\rho - c)\|\boldsymbol{v}_{S^{c}}\|_{1}
    +
    (1-\alpha)\rho\|\boldsymbol{v}\|_{1}
    .
\end{align*}
Subsequently, for the right side,
\begin{align*}
    (\alpha\rho + c)\|\boldsymbol{v}_{S}\|_{1} + 2\alpha(1-\rho)\|\boldsymbol{\beta}^{*}\|_{2}\|\boldsymbol{v}\|_{2}
    &\leq
    (\rho + c)\|\boldsymbol{v}_{S}\|_{1} + 2(1-\rho)\|\boldsymbol{\beta}^{*}\|_{2}\|\boldsymbol{v}\|_{2}
    .
\end{align*}
Therefore, we have $\mathcal{B}(\alpha, \rho, c, \boldsymbol{0}) \subseteq \mathcal{B}(1, \rho, c, \boldsymbol{0})$ and then, $\phi(\mathcal{B}(\alpha, \rho, c, \boldsymbol{0})) \geq \phi(\mathcal{B}(1, \rho, c, \boldsymbol{0}))$.
\end{proof}

Based on this lemma, we establish the proof of Proposition \ref{prop:tenet_vs_enet} as follows.

\begin{proof}[Proof of Proposition \ref{prop:tenet_vs_enet}]
Suppose $\tilde{\boldsymbol{\beta}}=\boldsymbol{\beta}^{*}$. Then, we have
\begin{align*}
    U_{\mathrm{TENet}}
    &=
    \frac{2(\alpha \rho+c) \lambda \sqrt{s}+4 \lambda \alpha(1-\rho)\left\|\boldsymbol{\beta}^{*}\right\|_2}{2 \lambda (1-\rho)+\phi_{\mathrm{TENet}}}
    ,
    \\
    U_{\mathrm{ENet}}
    &=
    \frac{2(\rho+c) \lambda \sqrt{s}+4 \lambda (1-\rho)\left\|\boldsymbol{\beta}^{*}\right\|_2}{2 \lambda (1-\rho)+\phi_{\mathrm{ENet}}}
    .
\end{align*}
From Lemma \ref{prop:phi_tenet_enet}, 
$1/\phi_{\mathrm{TENet}} \leq 1/\phi_{\mathrm{ENet}}$.
Since $\alpha \leq 1$,
\begin{align*}
    2(\alpha \rho+c) \lambda \sqrt{s}+4 \lambda \alpha(1-\rho)\left\|\boldsymbol{\beta}^{*}\right\|_2
    &\leq
    2(\rho+c) \lambda \sqrt{s}+4 \lambda (1-\rho)\left\|\boldsymbol{\beta}^{*}\right\|_2
    .
\end{align*}
Therefore, we obtain $U_{\mathrm{TENet}} \leq U_{\mathrm{ENet}}$.
\end{proof}

\section{Proof of Proposition \ref{prop:tenet_vs_tlasso}}

We prove Proposition \ref{prop:tenet_vs_tlasso} as follows.

\begin{proof}[Proof of Proposition \ref{prop:tenet_vs_tlasso}]
Suppose $\tilde{\boldsymbol{\beta}}=\boldsymbol{\beta}^{*}$. Then, we have
\begin{align*}
    U_{\mathrm{TENet}}
    &=
    \frac{2(\alpha \rho+c) \lambda \sqrt{s}+4 \lambda \alpha(1-\rho)\left\|\boldsymbol{\beta}^{*}\right\|_2}{2 \lambda (1-\rho)+\phi_{\mathrm{TENet}}}
    ,
    \\
    U_{\mathrm{TLasso}}
    &=
    \frac{2(\alpha+c) \lambda \sqrt{s}}{\phi_{\mathrm{ENet}}}
    .
\end{align*}
Regarding the numerator, from $\sqrt{s}/2\geq\|\boldsymbol{\beta}^{*}_{S}\|_2$,
\begin{align*}
    2(\alpha \rho+c) \lambda \sqrt{s}+4 \lambda \alpha(1-\rho)\left\|\boldsymbol{\beta}^{*}\right\|_2
    &=
    2(\alpha \rho+c) \lambda \sqrt{s}+4 \lambda \alpha(1-\rho)\left\|\boldsymbol{\beta}^{*}_{S}\right\|_2
    \\
    &\leq
    2(\alpha \rho+c) \lambda \sqrt{s}+2 \lambda \alpha(1-\rho)\sqrt{s}
    \\
    &=
    2(\alpha+c) \lambda \sqrt{s}
    .
\end{align*}
Due to $\phi_{\mathrm{TENet}} + 2 \lambda (1-\rho) \geq \phi_{\mathrm{TLasso}}$, we obtain $U_{\mathrm{TLasso}} \geq U_{\mathrm{TENet}}$.
\end{proof}

\section{Proof of Proposition \ref{prop:sample_X_GRE}}

We first introduce the following theorem from \cite{raskutti2010restricted}:

\begin{theorem}[Theorem 1 of \cite{raskutti2010restricted}]
    \label{thm:gaussian_random_ineq}
    For any Gaussian random design $\mathbf{X} \in \mathbb{R}^{n \times p}$ with i.i.d. $\mathcal{N}(0, \mathbf{\Sigma})$ rows, there are universal positive constants $c^{\prime},\,c^{\prime \prime}$ such that
    \begin{align*}
        \frac{\|\mathbf{X} \boldsymbol{v}\|_2}{\sqrt{n}}
        &\geq
        \frac{1}{4}\left\|\mathbf{\Sigma}^{1 / 2} \boldsymbol{v}\right\|_2 - 9 M \sqrt{\frac{\log p}{n}}\|\boldsymbol{v}\|_1 \quad \text {for all }\, \boldsymbol{v} \in \mathbb{R}^p
        ,
    \end{align*}
    where $M := \max _{j=1, \ldots, p} \Sigma_{j j}$, with probability at least $1-c^{\prime \prime} \exp (-c^{\prime} n)$.
\end{theorem}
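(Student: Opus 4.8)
This statement is Theorem~1 of \cite{raskutti2010restricted}, and the plan is to reproduce its Gaussian-comparison argument. Since both sides are positively homogeneous of degree one in $\boldsymbol{v}$, it suffices to establish the inequality on the rescaled sphere $\mathbb{S} = \{\boldsymbol{v} \in \mathbb{R}^{p} : \|\mathbf{\Sigma}^{1/2}\boldsymbol{v}\|_2 = 1\}$, i.e.\ to show $\|\mathbf{X}\boldsymbol{v}\|_2/\sqrt{n} \ge \tfrac{1}{4} - 9M\sqrt{\log p / n}\,\|\boldsymbol{v}\|_1$ uniformly over $\boldsymbol{v} \in \mathbb{S}$. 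First I would factor $\mathbf{X} = \mathbf{W}\mathbf{\Sigma}^{1/2}$, where $\mathbf{W} \in \mathbb{R}^{n \times p}$ has i.i.d.\ $\mathcal{N}(0,1)$ entries, so that $\boldsymbol{u} := \mathbf{\Sigma}^{1/2}\boldsymbol{v}$ is a unit vector on $\mathbb{S}$ and $\|\mathbf{X}\boldsymbol{v}\|_2 = \|\mathbf{W}\boldsymbol{u}\|_2$; this reduction makes the later concentration scale independent of the conditioning of $\mathbf{\Sigma}$.

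The core estimate is a lower bound, for each fixed radius $t > 0$, on $F(t) := \inf\{\|\mathbf{X}\boldsymbol{v}\|_2/\sqrt{n} : \boldsymbol{v} \in \mathbb{S},\ \|\boldsymbol{v}\|_1 \le t\}$. Writing $\|\mathbf{W}\boldsymbol{u}\|_2 = \sup_{\|\boldsymbol{w}\|_2 \le 1}\boldsymbol{w}^{\top}\mathbf{W}\boldsymbol{u}$ exhibits $\sqrt{n}\,F(t)$ as an $\inf$--$\sup$ of the canonical bilinear Gaussian process $\boldsymbol{w}^{\top}\mathbf{W}\boldsymbol{u}$. I would then apply Gordon's Gaussian comparison inequality, which lower bounds the expectation $\mathbb{E}[\inf\sup \boldsymbol{w}^{\top}\mathbf{W}\boldsymbol{u}]$ by $\mathbb{E}[\inf\sup(\|\boldsymbol{u}\|_2\langle\boldsymbol{g},\boldsymbol{w}\rangle + \|\boldsymbol{w}\|_2\langle\boldsymbol{h},\boldsymbol{u}\rangle)]$ for independent standard Gaussians $\boldsymbol{g}\in\mathbb{R}^{n}$ and $\boldsymbol{h}\in\mathbb{R}^{p}$. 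On $\mathbb{S}$ the factor $\|\boldsymbol{u}\|_2 = 1$ collapses the $\boldsymbol{w}$-supremum to $\|\boldsymbol{g}\|_2$ and leaves the term $\inf\{\langle\mathbf{\Sigma}^{1/2}\boldsymbol{h},\boldsymbol{v}\rangle : \|\boldsymbol{v}\|_1 \le t\}$, which by Hölder's inequality is at least $-t\,\|\mathbf{\Sigma}^{1/2}\boldsymbol{h}\|_\infty$. Using $\mathbb{E}\|\boldsymbol{g}\|_2 \gtrsim \sqrt{n}$ together with the fact that each coordinate of $\mathbf{\Sigma}^{1/2}\boldsymbol{h}$ is $\mathcal{N}(0,\Sigma_{jj})$ with $\Sigma_{jj} \le M$, so that $\mathbb{E}\|\mathbf{\Sigma}^{1/2}\boldsymbol{h}\|_\infty \lesssim \sqrt{M\log p}$ (this is the source of the $\|\boldsymbol{v}\|_1$ coefficient in the statement), gives $\mathbb{E}F(t) \ge c_0 - c_1\sqrt{M\log p/n}\,t$ for a universal $c_0 > \tfrac14$.

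To turn this into a high-probability bound I would invoke Gaussian concentration. As a function of $\mathbf{W}$, the map $\mathbf{W}\mapsto F(t)$ is an infimum of the $1$-Lipschitz maps $\mathbf{W}\mapsto\|\mathbf{W}\boldsymbol{u}\|_2$ over unit vectors $\boldsymbol{u}$, hence itself $(1/\sqrt{n})$-Lipschitz in the Frobenius norm; the Borell--TIS inequality then gives $F(t) \ge \mathbb{E}F(t) - \delta$ with probability at least $1 - 2\exp(-n\delta^2/2)$. Choosing $\delta = c_0 - \tfrac14$ yields, for each fixed $t$, the bound $F(t) \ge \tfrac14 - c_1\sqrt{M\log p/n}\,t$ off an event of probability $2\exp(-c'n)$.

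Finally I would promote this fixed-radius bound to a uniform statement over $\mathbb{S}$ by a peeling (dyadic slicing) argument: partition $\mathbb{S}$ into shells $\{2^{\ell-1}\mu \le \|\boldsymbol{v}\|_1 < 2^{\ell}\mu\}$, apply the fixed-radius bound at $t_\ell = 2^{\ell}\mu$ on each shell (the factor $2$ from $\|\boldsymbol{v}\|_1 \ge t_\ell/2$ being absorbed into the constant, which is where the final $9$ appears), and take a union bound. The point that keeps the aggregate failure probability at $c''\exp(-c'n)$ is that the constraint is vacuous once $c_1\sqrt{M\log p/n}\,t_\ell > \tfrac14$, because $\|\mathbf{X}\boldsymbol{v}\|_2 \ge 0$; hence only $O(\log n)$ shells impose a non-trivial condition and the union runs over finitely many events. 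I expect the two genuinely delicate points to be the correct invocation of Gordon's comparison---matching the mixed-sign auxiliary process and getting the direction of the inequality right---and the peeling step, where one must check that the geometric spacing combined with the eventual vacuity of the bound controls the union over the continuum of radii $t = \|\boldsymbol{v}\|_1$.
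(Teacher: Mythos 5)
The paper does not prove this statement at all---it is imported verbatim as Theorem~1 of \cite{raskutti2010restricted}, and only its conclusion is invoked in the proof of Proposition~\ref{prop:sample_X_GRE}---so there is no internal proof to compare against. Your sketch is correct and faithfully reconstructs the argument of the cited source itself (homogeneity reduction to the ellipse $\|\mathbf{\Sigma}^{1/2}\boldsymbol{v}\|_2=1$, Gordon's Gaussian min--max comparison to bound the expected infimum, Borell--TIS concentration via the $1/\sqrt{n}$-Lipschitz property, and the peeling argument to pass from fixed $\ell_1$-radius to a uniform bound), which is exactly how Raskutti, Wainwright, and Yu establish it.
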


Then, using this theorem, we prove Proposition \ref{prop:sample_X_GRE} as follows.

\begin{proof}[Proof of Proposition \ref{prop:sample_X_GRE}]
From the definition, all $\boldsymbol{v} \in \mathcal{B}(\alpha, \rho, c, \boldsymbol{\Delta})$ satisfies
\begin{align}
    \label{eq:ineq_B}
    (\alpha \rho - c)\|\boldsymbol{v}_{S^{c}}\|_{1} + (1-\alpha)\rho\|\boldsymbol{v}-\boldsymbol{\Delta}\|_{1}
    &\leq
    (\alpha\rho + c)\|\boldsymbol{v}_{S}\|_{1} + (1-\alpha)\rho \|\boldsymbol{\Delta}\|_{1} + 2(1-\rho)\|\boldsymbol{\Delta}_{\alpha}\|_{2}\|\boldsymbol{v}\|_{2}
    .
\end{align}
From the triangle inequality $\|\boldsymbol{v}-\boldsymbol{\Delta}\|_1 \geq\|\boldsymbol{\Delta}\|_1-\|\boldsymbol{v}\|_1$, it holds that
\begin{align*}
    (\alpha \rho - c)\|\boldsymbol{v}_{S^{c}}\|_{1} + (1-\alpha)\rho \left(\|\boldsymbol{\Delta}\|_1 - \|\boldsymbol{v}\|_1\right)
    &\leq
    (\alpha\rho + c)\|\boldsymbol{v}_{S}\|_{1} + (1-\alpha)\rho \|\boldsymbol{\Delta}\|_{1} + 2(1-\rho)\|\boldsymbol{\Delta}_{\alpha}\|_{2}\|\boldsymbol{v}\|_{2}
    .
\end{align*}
Then, using $\|\boldsymbol{v}\|_1 = \|\boldsymbol{v}_{S}\|_1 + \|\boldsymbol{v}_{S^{c}}\|_1$, we have
\begin{align*}
    (2\alpha \rho - c - \rho)\|\boldsymbol{v}_{S^{c}}\|_{1}
    &\leq
    (\rho + c)\|\boldsymbol{v}_{S}\|_{1} + 2(1-\rho)\|\boldsymbol{\Delta}_{\alpha}\|_{2}\|\boldsymbol{v}\|_{2}
    .
\end{align*}
Consequently, it holds that
\begin{align*}
    (2\alpha \rho - c - \rho)\|\boldsymbol{v}\|_{1}
    &\leq
    2\alpha \rho\|\boldsymbol{v}_{S}\|_{1} + 2(1-\rho)\|\boldsymbol{\Delta}_{\alpha}\|_{2}\|\boldsymbol{v}\|_{2}
    .
\end{align*}
By the norm inequality $\|\boldsymbol{v}_S \|_1 \leq \sqrt{s} \|\boldsymbol{v}_S \|_2 \leq \sqrt{s}\|\boldsymbol{v}\|_2$, we obtain
\begin{align*}
    (2\alpha \rho - c - \rho)\|\boldsymbol{v}\|_{1}
    &\leq
    \left(2\alpha \rho \sqrt{s} + 2(1-\rho)\|\boldsymbol{\Delta}_{\alpha}\|_{2}\right)\|\boldsymbol{v}\|_{2}
    .
\end{align*}
Then, using $2\alpha \rho - c - \rho > 0$, we have
\begin{align}
    \label{eq:ineq_l1norm_l2norm}
    \|\boldsymbol{v}\|_{1}
    &\leq
    \frac{2\alpha \rho \sqrt{s} + 2(1-\rho)\|\boldsymbol{\Delta}_{\alpha}\|_{2}}{2\alpha \rho - c - \rho}\|\boldsymbol{v}\|_{2}
    .
\end{align}
From Theorem \ref{thm:gaussian_random_ineq} and (\ref{eq:ineq_l1norm_l2norm}), for all $\boldsymbol{v} \in \mathcal{B}(\alpha, \rho, c, \boldsymbol{\Delta})$, it holds that
\begin{align*}
    \frac{\|\mathbf{X} \boldsymbol{v}\|_2}{\sqrt{n}}
    &\geq
    \frac{1}{4}\left\|\mathbf{\Sigma}^{1 / 2} \boldsymbol{v}\right\|_2 - 9 M \sqrt{\frac{\log p}{n}} \cdot \frac{2\alpha \rho \sqrt{s} + 2(1-\rho)\|\boldsymbol{\Delta}_{\alpha}\|_{2}}{2\alpha \rho - c - \rho}\|\boldsymbol{v}\|_{2}
    ,
\end{align*}
with probability at least $1-c^{\prime \prime} \exp (-c^{\prime} n)$.
Using $\left\|\mathbf{\Sigma}^{1 / 2} \boldsymbol{v}\right\|_2 \geq \sqrt{\gamma} \|\boldsymbol{v}\|_2$, we obtain
\begin{align*}
    \frac{\|\mathbf{X} \boldsymbol{v}\|_2}{\sqrt{n}}
    &\geq
    \left(\frac{\sqrt{\gamma}}{4} - 9 M \sqrt{\frac{\log p}{n}} \cdot \frac{2\alpha \rho \sqrt{s} + 2(1-\rho)\|\boldsymbol{\Delta}_{\alpha}\|_{2}}{2\alpha \rho - c - \rho}\right)\|\boldsymbol{v}\|_{2}
    .
\end{align*}
Therefore, if the sample size $n$ satisfies
\begin{align*}
    n
    &>
    \frac{(72 M)^2}{\gamma} \left(\frac{2\alpha \rho \sqrt{s} + 2(1-\rho)\|\boldsymbol{\Delta}_{\alpha}\|_{2}}{2\alpha \rho - c - \rho}\right) \log p
    ,
\end{align*}
then we have
\begin{align*}
    \frac{\|\mathbf{X} \boldsymbol{v}\|_2}{\sqrt{n}}
    &\geq
    \frac{\sqrt{\gamma}}{8} \|\boldsymbol{v}\|_{2}
    ,
\end{align*}
with probability at least $1-c^{\prime \prime} \exp (-c^{\prime} n)$.
This completes the proof.
\end{proof}

\section{Proof of Theorem \ref{thm:grouping_effect}}

We establish the following proof of Theorem \ref{thm:grouping_effect}.

\begin{proof}[Proof of Theorem \ref{thm:grouping_effect}]
By differentiating $\mathcal{L}(\boldsymbol{\beta} ; \tilde{\boldsymbol{\beta}})$ with respect to $\beta_j$,
\begin{align*}
    \frac{\partial}{\partial \beta_{j}}\mathcal{L}(\boldsymbol{\beta} ; \tilde{\boldsymbol{\beta}})
    &:= -\frac{1}{n} \sum_{i=1}^n X_{i,j}\left(y_i - \boldsymbol{\beta}^{\top} \boldsymbol{X}_i\right)
    +
    \lambda\alpha\rho \operatorname{sgn}(\beta_j) + 2\lambda\alpha(1-\rho) {\beta}_j
    \\
    &\quad +
    \lambda(1-\alpha)\rho \operatorname{sgn}({\beta}_j - \tilde{\beta}_j) + 2\lambda(1-\alpha)(1-\rho) ({\beta}_j - \tilde{\beta}_j)
    .
\end{align*}    
From the definition of the estimator, $\frac{\partial}{\partial \beta_{j}}\mathcal{L}(\hat{\boldsymbol{\beta}} ; \tilde{\boldsymbol{\beta}})=0$
because of strict convexity of $\mathcal{L}(\boldsymbol{\beta} ; \tilde{\boldsymbol{\beta}})$ for $\rho \neq 1$.
Therefore, if $\hat{{\beta}}_{j}\hat{{\beta}}_{k} > 0$ and $(\hat{{\beta}}_{j} - \tilde{{\beta}}_{j})(\hat{{\beta}}_{k}-\tilde{{\beta}}_{k}) > 0$, then we have
\begin{align*}
    &\frac{\partial}{\partial \beta_{j}}\mathcal{L}(\hat{\boldsymbol{\beta}} ; \tilde{\boldsymbol{\beta}}) - \frac{\partial}{\partial \beta_{k}}\mathcal{L}(\hat{\boldsymbol{\beta}} ; \tilde{\boldsymbol{\beta}})
    \\
    &= -\frac{1}{n} \sum_{i=1}^n \left(X_{i,j} - X_{i,k}\right)\left(y_i - \hat{\boldsymbol{\beta}}^{\top} \boldsymbol{X}_i\right)
    +
    2\lambda(1-\rho) \left(\hat{\beta}_j - \hat{\beta}_k\right)
    \\
    &\quad 
    -2\lambda(1-\alpha)(1-\rho) (\tilde{\beta}_k - \tilde{\beta}_j)
    =0
    ,
\end{align*}
thus we obtain
\begin{align*}
    \left|\hat{\beta}_j - \hat{\beta}_k\right|
    &=
    \frac{1}{2n\lambda(1-\rho)} \left|\left(\boldsymbol{X}_{\cdot,j} - \boldsymbol{X}_{\cdot,k}\right)^{\top}\left(\boldsymbol{y} - \hat{\boldsymbol{\beta}}^{\top} \mathbf{X}\right)
    + (1-\alpha)(\tilde{\beta}_k - \tilde{\beta}_j)\right|
    ,
    \\
    &\leq
    \frac{1}{2n\lambda(1-\rho)} \left\|\boldsymbol{X}_{\cdot,j} - \boldsymbol{X}_{\cdot,k}\right\|_2\left\|\boldsymbol{y} - \hat{\boldsymbol{\beta}}^{\top} \mathbf{X}\right\|_2
    + (1-\alpha)\left|\tilde{\beta}_k - \tilde{\beta}_j\right|
    ,
    \\
    &=
    \frac{1}{2n\lambda(1-\rho)} \left\|\boldsymbol{y} - \hat{\boldsymbol{\beta}}^{\top} \mathbf{X}\right\|_2 \sqrt{2 n (1 - r_{jk})}
    + (1-\alpha)\left|\tilde{\beta}_k - \tilde{\beta}_j\right|
    ,
\end{align*}
where $\boldsymbol{X}_{\cdot,j}$ denotes the $j$th column of $\mathbf{X}$.
Since $\frac{\partial}{\partial \beta_{j}}\mathcal{L}(\hat{\boldsymbol{\beta}} ; \tilde{\boldsymbol{\beta}}) \leq \frac{\partial}{\partial \beta_{j}}\mathcal{L}({\boldsymbol{0}} ; \tilde{\boldsymbol{\beta}})$, we have
\begin{align*}
    &\frac{1}{2n}\left\|\boldsymbol{y} - \hat{\boldsymbol{\beta}}^{\top} \mathbf{X}\right\|_{2}^{2}
    \\
    &\leq
    \frac{1}{2n}\left\|\boldsymbol{y} - \hat{\boldsymbol{\beta}^{\top}} \mathbf{X}\right\|_{2}^{2}
    +
    \lambda \left[ \alpha \left\{\rho \|\hat{\boldsymbol{\beta}}\|_1 + (1-\rho) \|\hat{\boldsymbol{\beta}}\|_2^2\right\}
    +
    (1-\alpha) \left\{\rho \|\hat{\boldsymbol{\beta}} - \tilde{\boldsymbol{\beta}}\|_1 + (1-\rho) \|\hat{\boldsymbol{\beta}} - \tilde{\boldsymbol{\beta}}\|_2^2\right\} \right]
    \\
    &\leq \frac{1}{2n} \left\|\boldsymbol{y}\right\|_{2}^{2}
    +
    \lambda(1-\alpha)\rho \|\tilde{\boldsymbol{\beta}}\|_1 + \lambda(1-\alpha) (1-\rho) \|\tilde{\boldsymbol{\beta}}\|_2^2
    .
\end{align*}
From the results above, we obtain
\begin{align*}
    \left|\hat{\beta}_j - \hat{\beta}_k\right|
    &\leq
    \sqrt{\frac{\left\|\boldsymbol{y}\right\|_{2}^{2}
    +
    2n\lambda(1-\alpha)\rho \|\tilde{\boldsymbol{\beta}}\|_1 + 2n\lambda(1-\alpha) (1-\rho) \|\tilde{\boldsymbol{\beta}}\|_2^2}{2n\lambda^{2}(1-\rho)^{2}}} \sqrt{1 - r_{jk}}
    + (1-\alpha)\left|\tilde{\beta}_k - \tilde{\beta}_j\right|
    .
\end{align*}
\end{proof}

\bibliography{bibliography}
\bibliographystyle{apalike}

\end{document}